\documentclass{article} 
\usepackage{iclr2026_conference,times}
\usepackage[ruled,vlined]{algorithm2e}

\usepackage{amsmath,amsfonts,bm}

\def\eqref#1{equation~\ref{#1}}

\def\1{\bm{1}}

\DeclareMathAlphabet{\mathsfit}{\encodingdefault}{\sfdefault}{m}{sl}
\SetMathAlphabet{\mathsfit}{bold}{\encodingdefault}{\sfdefault}{bx}{n}

\usepackage{listings}
\usepackage{minted}
\usepackage{hyperref}
\usepackage{url}
\usepackage{tikzducks}
\usepackage{amsmath, amssymb, amsthm}
\usepackage{booktabs}  
\usepackage{subcaption}
\usepackage{array}
\usepackage[final]{microtype}
\usepackage{wrapfig}
\usepackage{cleveref}
\usepackage[T1]{fontenc}
\usepackage[utf8]{inputenc}

\newtheorem{theorem}{Theorem}
\newtheorem{lemma}{Lemma}
\newtheorem{corollary}{Corollary}

\newcolumntype{P}[1]{>{\centering\arraybackslash}p{#1}}
\colorlet{shadecolor}{gray!30}
\definecolor{shamrockgreen}{rgb}{0.0, 0.62, 0.38}
\definecolor{urobilin}{rgb}{0.88, 0.68, 0.13}
\definecolor{sinopia}{rgb}{0.8, 0.25, 0.04}

\usepackage{xcolor,pifont}
\newcommand*\colourcheck[1]{%
  \expandafter\newcommand\csname #1check\endcsname{\textcolor{#1}{\ding{52}}}%
}
\colourcheck{shamrockgreen}
\colourcheck{urobilin}

\definecolor{commentcolor}{RGB}{110,154,155}   
\definecolor{defcolor}{RGB}{225,81,145}

\newcommand{\PyComment}[1]{\begingroup\ttfamily\color{commentcolor}\#~#1\endgroup}
\newcommand{\PyCode}[1]{\begingroup\ttfamily #1\endgroup}
\newcommand{\PyDef}[1]{\begingroup\ttfamily\color{defcolor} #1\endgroup}

\newcommand\freefootnote[1]{%
\let\thefootnote\relax%
\footnotetext{#1}%
\let\thefootnote\svthefootnote}

\definecolor{linkColor}{RGB}{63, 136, 196}     

\hypersetup{
    colorlinks=true,
    urlcolor=linkColor,       
    linkcolor=linkColor,      
    citecolor=linkColor,      
}

\title{FIRE: Frobenius-Isometry Reinitialization for Balancing the Stability–Plasticity Tradeoff}

\author{
Isaac Han\textsuperscript{1} \hspace{3em}
Sangyeon Park\textsuperscript{1} \hspace{3em}
Seungwon Oh\textsuperscript{1} \hspace{3em}
Donghu Kim\textsuperscript{2, 3}
\vspace{2mm}
\\
\textbf{
Hojoon Lee\textsuperscript{2, 3}* \hspace{3em}
Kyung-Joong Kim\textsuperscript{1}*
}
\vspace{2mm}
\\
\textsuperscript{1}Gwangju Institute of Science and Technology (GIST) \\
\textsuperscript{2}Korea Advanced Institute of Science and Technology (KAIST) \\
\textsuperscript{3}Holiday Robotics \\
\textsuperscript{*} Equal advising
\\
\texttt{\{lssac7778, tkddus0421, sw980907\}@gm.gist.ac.kr}
}

\iclrfinalcopy 

\begin{document}

\maketitle

\begin{abstract}
  
Deep neural networks trained on nonstationary data must balance stability (i.e., retaining prior knowledge) and plasticity (i.e., adapting to new tasks). Standard reinitialization methods, which reinitialize weights toward their original values, are widely used but difficult to tune: conservative reinitializations fail to restore plasticity, while aggressive ones erase useful knowledge. We propose FIRE, a principled reinitialization method that explicitly balances the stability–plasticity tradeoff. FIRE quantifies stability through Squared Frobenius Error (SFE), measuring proximity to past weights, and plasticity through Deviation from Isometry (DfI), reflecting weight isotropy. The reinitialization point is obtained by solving a constrained optimization problem, minimizing SFE subject to DfI being zero, which is efficiently approximated by Newton–Schulz iteration. FIRE is evaluated on continual visual learning (CIFAR-10 with ResNet-18), language modeling (OpenWebText with GPT-0.1B), and reinforcement learning (HumanoidBench with SAC and Atari games with DQN). Across all domains, FIRE consistently outperforms both naive training without intervention and standard reinitialization methods, demonstrating effective balancing of the stability–plasticity tradeoff. Explore codes and videos at project page: \href{https://isaac7778.github.io/fire/}{https://isaac7778.github.io/fire/}.

\end{abstract}

\section{Introduction}

\begin{wrapfigure}[16]{r}{0.42\textwidth} 
  \vspace{-1.5em} 
  \centering
  \includegraphics[width=0.93\linewidth]{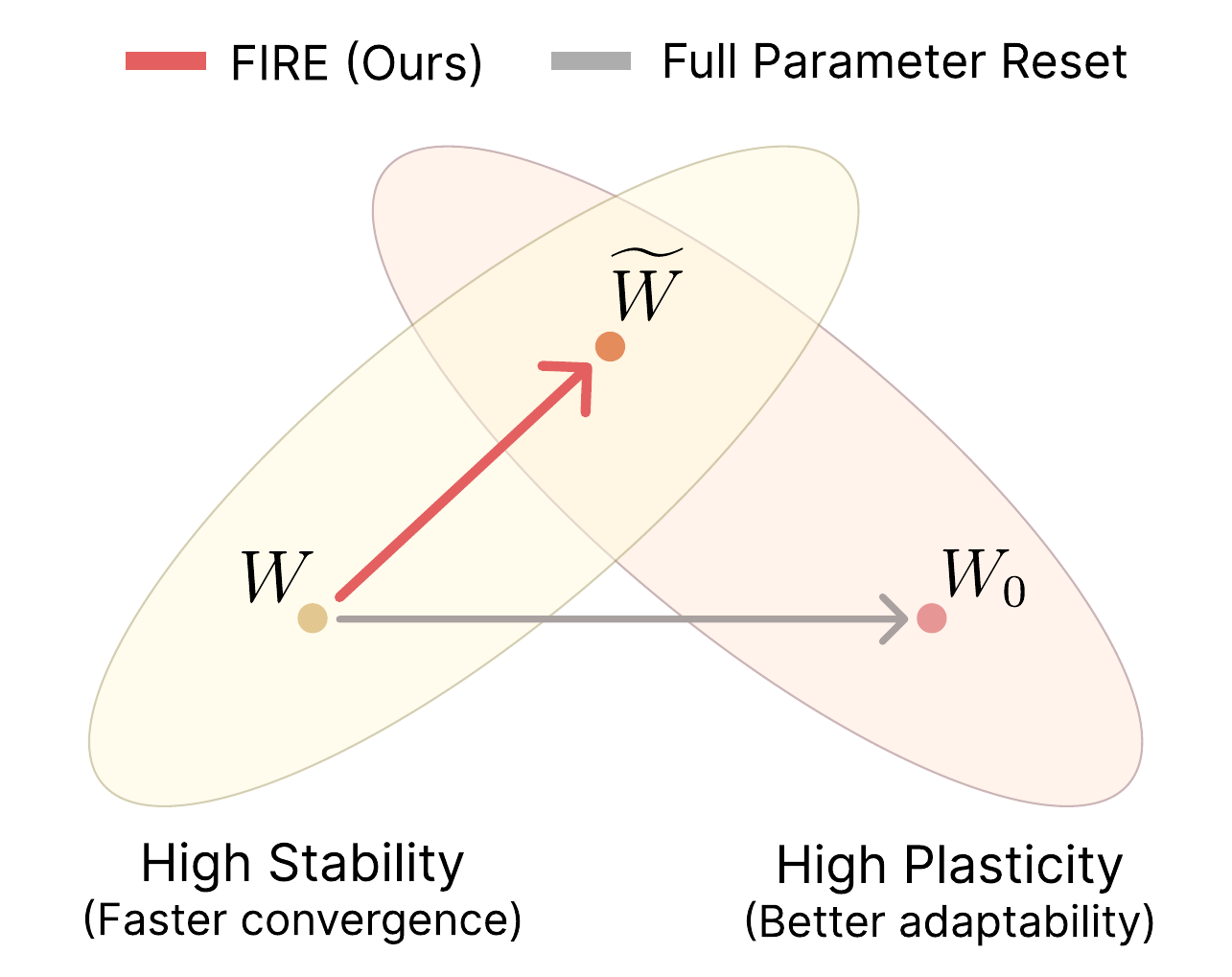}
  \vspace{-1mm}
  \caption{\textbf{Illustration of FIRE.} Solving a constrained optimization problem, FIRE places weights at the intersection of high-stability and high-plasticity manifolds.}
  \vspace{-2mm}
  \label{fig:concept}
\end{wrapfigure}

Deep neural networks are typically trained under a fixed, stationary data distribution \citep{brown2020language, podellsdxl}. However, many real-world applications require models to adapt continually as new data and shifting distributions emerge.
In computer vision, autonomous driving systems must recognize unseen traffic signs, road layouts, or weather conditions that were absent during training \citep{verwimp2023clad}. Large language models, trained once and deployed with a fixed knowledge cutoff date, quickly become outdated unless continually updated \citep{kecontinual}. Likewise, robots deployed in dynamic physical environments must adjust their perception and control policies as the environment changes \citep{wolczyk2021continual}. 
In all of these domains, a central challenge is reliable adaptation to nonstationary data while preserving prior knowledge.

This challenge is often framed as a balance between two competing properties: \textit{stability}, the retention of learned knowledge, and \textit{plasticity}, the ability to incorporate new information \citep{mermillod2013stability}. Different research communities emphasize these properties to varying degrees. Conventional continual learning assumes limited access to past data, prioritizing stability to mitigate catastrophic forgetting \citep{kirkpatrick2017overcoming, rebuffi2017icarl, rusu2016progressive}. 
In contrast, most foundation models and robotic agents are trained on expanding datasets where past data remain accessible~\citep{achiam2023gpt, team2025gemini}, making plasticity loss a central challenge \citep{lyle2023understanding, berariu2021study}.
In this regime, stability is less about preserving past knowledge and more about accelerating adaptation to new tasks by leveraging prior representations.  Motivated by these real world scenarios, we study the stability-plasticity tradeoff under the assumption of access to past data during continual learning.

Existing approaches to mitigating plasticity loss fall broadly into two categories: regularization-based and reinitialization-based. Regularization-based methods constrain parameters or features near their initialization \citep{kumar2025maintaining, lyleunderstanding}, or enforce weight orthogonality \citep{chung2024parseval}. While these methods can preserve a favorable geometry for future learning, overly strong constraints slow convergence, and overly weak ones fail to prevent plasticity degradation. Reinitialization-based methods instead reset weights to earlier checkpoints when new data arrive \citep{ash2020warm, nikishin2022primacy, lee2024slow, shin2024dash}. Their advantage lies in avoiding interference with current optimization, often yielding faster adaptation with lower overhead. However, they also suffer from a tuning dilemma: aggressive resets erase useful knowledge, while conservative ones provide little plasticity gain.

We aim to resolve this dilemma by treating reinitialization as a principled constrained optimization problem. Our approach relies on two complementary measures that capture the core dimensions of the stability–plasticity tradeoff. 
First, we define stability using the Squared Frobenius Error (SFE) between current and past weights, which measures the sum of squared differences across all weight entries. A smaller SFE indicates greater similarity, meaning the model remains closer to its previous representations.
For plasticity, prior work has linked plasticity loss to sharp loss curvature \citep{lyle2023understanding}, dormant neurons \citep{sokar2023dormant}, and low rank features \citep{kumarimplicit}, but these metrics depend on incoming data and are non differentiable, limiting their use for optimization. We instead propose the Deviation from Isometry (DfI) \citep{pennington2017resurrecting, xiao2018dynamical}, which measures how close weight matrices are to orthonormal. We show that reducing DfI simultaneously decreases curvature, prevents neuron dormancy, increases feature rank, and remains differentiable, making it a practical measure of plasticity to optimize. A formal proof is provided in Section~\ref{sec:method}.

We propose FIRE (Frobenius–Isometry REinitialization), which minimizes the SFE subject to the DfI being $0$. As illustrated in Figure~\ref{fig:concept}, FIRE avoids the pitfalls of either overly conservative or aggressive reinitialization by projecting weights onto the isotropy manifold while remaining close to their previous subspace. While directly solving this constrained optimization is costly, it can be implemented efficiently with the Newton–Schulz iteration, adding less than $1\%$ to training time.

We evaluate FIRE on continual learning benchmarks in vision, language, and reinforcement learning, assuming access to past data. For vision, we split CIFAR-10, CIFAR-100, and Tiny-ImageNet into chunks under random or class-incremental protocols using ResNet~\citep{he2016deep} and Vision Transformer~\citep{dosovitskiy2020image}. For language, we use a warm-start setup where GPT-0.1B~\citep{karpathy2023nanogpt} is pretrained on WikiText-103 and then continually trained on a mixture of OpenWebText and WikiText-103. For reinforcement learning, we test continuous control with SAC~\citep{haarnoja2018soft} on HumanoidBench~\citep{sferrazza2024humanoidbench} and discrete control with DQN~\citep{mnih2015human} on Atari~\citep{bellemare2013arcade}. 
For vision and language tasks, reinitialization is applied whenever new data arrive, while in reinforcement learning it is applied once at the midpoint of training. Across all domains, FIRE consistently outperforms naive training and standard reinitialization, showing its effectiveness as a unified solution to the stability plasticity tradeoff.

\section{Related Work}

\subsection{Stability-Plasticity tradeoff}

The stability–plasticity tradeoff \citep{mermillod2013stability, kim2023stability} is a fundamental challenge in continual learning. Stability refers to the ability of a model to preserve previously acquired knowledge and avoid catastrophic forgetting when exposed to new data. Plasticity refers to the ability of a model to adapt flexibly and effectively to novel tasks. These two properties often conflict with each other since strong stability can make the model rigid and resistant to new learning while excessive plasticity can lead to the loss of past knowledge. 

Research in continual learning has therefore focused on methods that balance these competing requirements such as constraining parameter updates through regularization \citep{kirkpatrick2017overcoming}, revisiting earlier data through replay \citep{rebuffi2017icarl,  rolnick2019experience, lopez2017gradient, chaudhry2018efficient, aljundi2019online}, or designing architectures that separate parameters across tasks \citep{rusu2016progressive, mallya2018packnet, mallya2018piggyback, yoon2017lifelong, wortsman2020supermasks}. Their aim is to develop models that can maintain previously learned skills while remaining adaptive to new experiences.

\subsection{Loss of plasticity}

Deep learning has traditionally been studied under stationary datasets \citep{glorot2010understanding,he2015delving}, yet real-world applications often involve non-stationary streams \citep{shen2024step,kumar2025continual}. Training in such environments leads to a  loss of plasticity \citep{lyle2023understanding,dohare2024loss,kumar2025maintaining}, where models fail to adapt to new distributions. Prior work has identified potential indicators of this phenomenon, including dormant neurons \citep{sokar2023dormant,xu2023drm}, shifts in pre-activations \citep{lyle2024disentangling}, feature rank collapse \citep{kumarimplicit}, and diverging weight magnitudes \citep{lyle2024disentangling}.

Loss of plasticity hinders not only the ability to fit the training data, but also the ability to generalize to unseen data. Models trained incrementally often generalize worse than those trained from scratch \citep{ash2020warm,berariu2021study,lyle2025can}, due to factors such as diminished gradient norms \citep{ash2020warm}, weak feature changes \citep{lyle2025can}, and the compounding effects of small pretraining datasets or noisy labels \citep{lee2024slow}.

To counteract plasticity loss, reinitialization-based strategies such as S\&P \citep{ash2020warm}, DASH \citep{shin2024dash} reinitialize weights into an intermediate checkpoint, weight regularizers constrain parameters to initialization or specific subspaces \citep{kumar2025maintaining,elsayed2024weight,lewandowski2024learning}, and spectral or rank-based approaches explicitly maintain representation quality \citep{kumarimplicit,kumar2021dr3,he2024adaptive}. Another approach proposed reinitializing at the neuronal level, based on the utility of each neuron \citep{sokar2023dormant,dohare2024loss,elsayed2024addressing}. Recent work further leverages the fact that linear networks do not suffer from plasticity loss \citep{dohare2024loss,lewandowski2024plastic, parkactivation}.

\section{Method}
\label{sec:method}

In this section we explain how we frame the stability-plasticity tradeoff as a constrained optimization problem. To do this we first need two metrics: one for stability and one for plasticity loss. With these two pieces in place the optimization problem naturally emerges, and from this formulation we introduce our method FIRE, which provides an efficient approximation to the solution.

\subsection{Measure for stability}
\label{section:measure_sta}

 To measure stability, we propose a simple yet effective metric, the \emph{Squared Frobenius Error (SFE)}. SFE provides a natural way to quantify the preservation of learned information by comparing an original weight matrix $W$ with its modified counterpart $\widetilde{W}$. Formally,  
\begin{equation}
    \text{SFE}(W, \widetilde{W}) = \|W - \widetilde{W}\|_F^2,
\end{equation}
which captures the element-wise squared deviation between the two weight configurations. However, it remains unclear whether SFE can be used as a metric that can meaningfully capture similarity of feature representations. To clarify this point, we establish a theoretical link between SFE and the normalized feature covariance, a metric widely used in prior work to measure representation similarity \citep{lyle2025can, yang2022tensor}. In particular, we show that SFE provides an upper bound on the discrepancy between the normalized feature covariances of two distinct neural networks’ output features (Theorem \ref{thm:feature_covariance}).

\begin{theorem}[SFE bounds output feature covariance between two deep neural networks]
\label{thm:feature_covariance}
Let $\Theta=\{W^1,\dots,W^L\}$ and $\widetilde{\Theta}=\{\widetilde{W}^1,\dots,\widetilde{W}^L\}$ 
be the parameters of two depth-$L$ feedforward networks with elementwise activations 
$\sigma_\ell$ (Lipschitz constants $L_{\sigma_\ell}$).  

For an input batch $Z\in\mathbb{R}^{n\times d_0}$, we denote the layer outputs recursively by 
$H_\Theta^{\,0}(Z)=Z$ and 
$H_\Theta^{\,\ell}(Z)=\sigma_\ell(H_\Theta^{\,\ell-1}(Z)W^\ell)$. 
Let $B_\ell=\max\{\|W^\ell\|_2,\|\widetilde{W}^\ell\|_2\}$ be the maximum spectral norm of the weights in layer $\ell$ and $B_\Pi^\ell=\prod_{k=1}^\ell B_k$ the product across all layers. 
We further define $m_\ell=\min\{\|H_\Theta^{\,\ell}(Z)\|_F,\|H_{\widetilde{\Theta}}^{\,\ell}(Z)\|_F\}>0$. 
The normalized feature covariances of the two networks are given by 
$
C_\Theta^{\,\ell}(Z)=H_\Theta^{\,\ell}(Z)H_\Theta^{\,\ell}(Z)^\top/\|H_\Theta^{\,\ell}(Z)\|_F^2 
, \quad
C_{\widetilde{\Theta}}^{\,\ell}(Z)=H_{\widetilde{\Theta}}^{\,\ell}(Z)H_{\widetilde{\Theta}}^{\,\ell}(Z)^\top/\|H_{\widetilde{\Theta}}^{\,\ell}(Z)\|_F^2. 
$

Then the difference between the output feature covariances of the two networks is bounded as follows:
\[
\|C_\Theta^{\,\ell}(Z)-C_{\widetilde{\Theta}}^{\,\ell}(Z)\|_F^2
\le
\frac{16\,\|Z\|_F^2}{m_\ell^2}
\Big(\prod_{k=1}^\ell L_{\sigma_k}\Big)^{\!2}
B_{\Pi}^2
\Bigg(\sum_{j=1}^\ell \tfrac{1}{B_j^2}\Bigg)
\,\mathrm{SFE}(\Theta,\widetilde{\Theta}).    
\]

In particular, if each activation is $1$-Lipschitz $(L_{\sigma_k}\le 1)$ and each spectral norm is bounded $(B_j\le S)$, then
\[
\|C_\Theta^{\,\ell}(Z)-C_{\widetilde{\Theta}}^{\,\ell}(Z)\|_F
\;\le\;
\frac{4\,\|Z\|_F}{m_\ell}\,\sqrt{\ell}\,S^{\ell-1}\,
\sqrt{\mathrm{SFE}(\Theta,\widetilde{\Theta})}.
\]

\end{theorem}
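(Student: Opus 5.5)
We prove the bound in two stages. First we control the raw feature discrepancy $\|H_\Theta^{\ell}(Z)-H_{\widetilde\Theta}^{\ell}(Z)\|_F$ in terms of the layerwise weight differences. Then we convert this into a bound on the normalized covariances using a perturbation estimate for the map $H\mapsto HH^\top/\|H\|_F^2$.

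\textbf{Step 1 (feature perturbation).} Write $H_\ell=H_\Theta^{\ell}(Z)$ and $\widetilde H_\ell=H_{\widetilde\Theta}^{\ell}(Z)$. Since $\sigma_\ell$ acts elementwise and is $L_{\sigma_\ell}$-Lipschitz,
\[
\|H_\ell-\widetilde H_\ell\|_F\le L_{\sigma_\ell}\|H_{\ell-1}W^\ell-\widetilde H_{\ell-1}\widetilde W^\ell\|_F .
\]
We split the right-hand side as $(H_{\ell-1}-\widetilde H_{\ell-1})W^\ell+\widetilde H_{\ell-1}(W^\ell-\widetilde W^\ell)$. Using $\|AB\|_F\le\|A\|_F\|B\|_2$ and $\|AB\|_F\le \|A\|_2\|B\|_F$ gives
\[
\|H_\ell-\widetilde H_\ell\|_F\le L_{\sigma_\ell}\big(B_\ell\|H_{\ell-1}-\widetilde H_{\ell-1}\|_F+\|\widetilde H_{\ell-1}\|_2\|W^\ell-\widetilde W^\ell\|_F\big).
\]
To bound $\|\widetilde H_{\ell-1}\|_2\le \|\widetilde H_{\ell-1}\|_F$, we use $\|\sigma(X)\|_F\le L_\sigma\|X\|_F$. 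This needs $\sigma(0)=0$, which I will assume (it holds for ReLU and tanh); otherwise an additive term appears. It yields $\|\widetilde H_{j-1}\|_F\le \prod_{k<j}L_{\sigma_k}B_k\,\|Z\|_F$. Unrolling the recursion gives
\[
\|H_\ell-\widetilde H_\ell\|_F\le \Big(\prod_{k\le \ell}L_{\sigma_k}\Big)\|Z\|_F\sum_{j=1}^\ell \frac{B_\Pi^\ell}{B_j}\|W^j-\widetilde W^j\|_F .
\]
Cauchy–Schwarz then gives $\big(\sum_j \|W^j-\widetilde W^j\|_F/B_j\big)^2\le \big(\sum_j B_j^{-2}\big)\,\mathrm{SFE}(\Theta,\widetilde\Theta)$, which is exactly the structure in the statement.

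\textbf{Step 2 (normalized covariance perturbation).} This is the main obstacle: we must show $\|C(H)-C(\widetilde H)\|_F\le 4\|H-\widetilde H\|_F/m_\ell$. Write $u=H/\|H\|_F$ and $\tilde u=\widetilde H/\|\widetilde H\|_F$, so that $C=uu^\top$ and $\widetilde C=\tilde u\tilde u^\top$, with $\|u\|_F=\|\tilde u\|_F=1$. Then
\[
uu^\top-\tilde u\tilde u^\top=(u-\tilde u)u^\top+\tilde u(u-\tilde u)^\top,
\]
which gives $\|C-\widetilde C\|_F\le 2\|u-\tilde u\|_F$. The standard normalization estimate $\|H/\|H\|-\widetilde H/\|\widetilde H\|\|\le 2\|H-\widetilde H\|/\max(\|H\|,\|\widetilde H\|)\le 2\|H-\widetilde H\|/m_\ell$ follows from the triangle inequality after adding and subtracting $\widetilde H/\|H\|$. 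Together these give the constant $4$, and squaring gives $16$.

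\textbf{Step 3 (combine and specialize).} Squaring the Step 2 bound and inserting Step 1 gives the first inequality. For the corollary, set $L_{\sigma_k}\le1$ and $B_j\le S$. Each term satisfies $B_\Pi^2/B_j^2=\prod_{k\ne j}B_k^2\le S^{2(\ell-1)}$, so $B_\Pi^2\sum_j B_j^{-2}\le \ell S^{2(\ell-1)}$. Taking square roots gives the second inequality.
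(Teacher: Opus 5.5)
Your proposal is correct and follows essentially the paper's route: your one-step recursion, once unrolled, is exactly the paper's hybrid-network telescoping followed by the forward-norm accumulation and Cauchy--Schwarz, and your Step~2 uses the same two elementary facts (normalization costs $2/m_\ell$, the Gram map on unit-norm matrices costs a factor $2$) to obtain the constant $4$. Your explicit assumption $\sigma_k(0)=0$ is well placed. The paper's forward bound $\|H_\Theta^{\,j-1}(Z)\|_F\le\|Z\|_F\prod_{k<j}L_{\sigma_k}\|W^k\|_2$ uses it silently, and without it the stated inequality can fail: with $\sigma_1=\sigma_2=t+1$, a two-layer scalar network already gives a counterexample.
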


Theorem~\ref{thm:feature_covariance} shows that the discrepancy between normalized feature covariances is bounded by four factors: the input norm ($\|Z\|_F^2$), the Lipschitz constants of activation functions ($L_{\sigma_\ell}$), the spectral norms of the weight matrices ($B_\Pi$ and $B_j$), and SFE. In practice, inputs are typically normalized and have a fixed upper bound, and commonly used activation functions have small Lipschitz constants (e.g., $1$ for ReLU and Tanh). Thus, the contributions from $\|Z\|_F^2$ and $L_{\sigma_\ell}$ can be ignored. The remaining dominant factors are the spectral norms of the weight matrices ($B_\Pi$ and $B_j$) and SFE. This has two implications. First, for any fixed architecture and weight scale, minimizing SFE monotonically tightens the upper bound, and thus is an effective way to preserve feature similarity between two networks. Second, the slope of the bound with respect to SFE is proportional to the spectral norms: larger spectral norms make the worst-case discrepancy more sensitive to changes in SFE. In such regimes, reducing SFE becomes even more important for stability, since even a small increase in SFE can, in principle, lead to a substantial change in the resulting feature representations.

\subsection{Measure for plasticity loss}
\label{section:measure_pl}

In this work, we view plasticity loss as the situation where weights learned from previous data fail to serve as a favorable initialization point for new data. This perspective is particularly useful when designing reinitialization strategies that partially reinitialize weights before the arrival of new data. Prior analyses of plasticity loss suggest that one of its main causes is the increasing sharpness of the loss landscape curvature with respect to new data during training on the current task, which destabilizes optimization \citep{lyle2023understanding}. In addition, an increase in dormant neurons \citep{sokar2023dormant} and a collapse in effective rank \citep{kumarimplicit} have also been known to indicate plasticity loss.

However, as these measures are data-dependent and non-differentiable, they are inappropriate for the optimization objective. Instead, we propose Deviation from Isometry (DfI) \citep{pennington2017resurrecting, xiao2018dynamical}, an optimizable metric that closely connected to previous plasticity measure. 
\begin{equation}\label{eq:dfi}
    \text{DfI}(W) = \|W^\top W - I\|_F^2.
\end{equation}

Our theoretical analysis reveals that minimizing DfI results in a smoother loss landscape curvature (Theorem~\ref{thm:curvature}), a smaller number of dormant neurons (Theorem~\ref{thm:dfi-dormancy}), and a higher effective rank (Theorem~\ref{thm:dfi-general}). This supports the use of DfI as a suitable measure for plasticity measure. In addition, minimizing DfI also makes the weights close to isotropy, which is known as a key property of favorable neural network initializations, termed as dynamical isometry \citep{xiao2018dynamical}.

\begin{theorem}
[Hessian spectral norm bounded by layerwise DfIs]\label{thm:curvature}
We assume that
\begin{itemize}
    \item  the inputs are whitened, so that the empirical covariance $\Sigma_Z = \tfrac{1}{n} Z^\top Z$ is approximately the identity: $\Sigma_Z:=\frac1n Z^\top Z \approx I$
    \item for every sample $i$ and relevant parameter vector $u$, the Hessian norm satisfies $|\nabla^2_u \ell_i(u)|_2 \le \beta$, while the gradient norm is bounded by $|\nabla_u \ell_i(u)|_2 \le \gamma$. 
    \item finally, we focus on a fixed ReLU activation pattern at the point of interest, so that the network is piecewise linear in that region. In particular, each diagonal gating matrix arising from the ReLU has operator norm $\le 1$.
\end{itemize}

Let $\nu_k=1+\sqrt{\mathrm{DfI}(W_k)}$, then the Hessian spectral norm can be bounded as follows:
\[
\big\|\nabla^2_\theta \mathcal{L}(W_{1:L})\big\|_2
\;\le\;
\beta\,\sum_{k=1}^L \prod_{j\neq k}\nu_j
\;+\;
2\gamma\,\sum_{1\le k<\ell\le L} \prod_{j\notin\{k,\ell\}}\nu_j.
\]
\end{theorem}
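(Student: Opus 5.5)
Within the fixed ReLU activation region the network is linear in each weight matrix separately. The output for sample $i$ can be written as $f_i(\theta)=x_i^\top W_1 D_1 W_2 D_2\cdots D_{L-1}W_L$, with the gating matrices $D_k$ held constant and $\|D_k\|_2\le 1$. The loss is $\mathcal{L}=\frac1n\sum_i \ell_i(f_i(\theta))$. The plan is to apply the chain rule (Gauss--Newton decomposition) and bound the two resulting pieces blockwise:
\[
\nabla^2_\theta\mathcal{L}=\frac1n\sum_i J_i^\top \nabla^2\ell_i\, J_i+\frac1n\sum_i\sum_{c}\partial_c\ell_i\,\nabla^2_\theta f_{i,c}.
\]
Both pieces are then controlled by the spectral norms of the weights, which we tie to DfI.

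\textbf{Step 1: spectral norm versus DfI.} For any $W$, $\|W\|_2^2-1\le\|W^\top W-I\|_2\le\|W^\top W-I\|_F=\sqrt{\mathrm{DfI}(W)}$. Hence $\|W_k\|_2^2\le\nu_k$. If also $\|W_k\|_2\le 1$ then $\|W_k\|_2\le\nu_k$ trivially, and in all cases $\|W_k\|_2\le\sqrt{\nu_k}\le\nu_k$ since $\nu_k\ge1$. So every product of weight operator norms is dominated by the corresponding product of $\nu_j$. Whitening gives $\frac1n\sum_i\|x_i\|^2$-type quantities bounded via $\|\Sigma_Z\|_2\approx1$. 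This is how the input factor disappears: we pair $\frac1n\sum_i x_i x_i^\top=\Sigma_Z$ inside the quadratic forms instead of bounding $\|x_i\|$ pointwise.

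\textbf{Step 2: Gauss--Newton term.} Take a unit direction $u=(U_1,\dots,U_L)$. The directional derivative is $J_iu=\sum_k x_i^\top W_1D_1\cdots U_k\cdots W_L$. Bound $u^\top J_i^\top\nabla^2\ell_iJ_iu\le\beta\|J_iu\|^2$ and average over $i$ using $\Sigma_Z\approx I$. The $k$-th block then contributes operator norm at most $\prod_{j\ne k}\|W_j\|_2\le\prod_{j\ne k}\nu_j$. To obtain the stated sum $\sum_k\prod_{j\ne k}\nu_j$ rather than its square, I would bound the Gauss--Newton matrix blockwise: the spectral norm of a PSD block matrix is at most the sum of the spectral norms of its diagonal blocks. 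Each diagonal block $\frac1n\sum_i J_{i,k}^\top\nabla^2\ell_i J_{i,k}$ has norm at most $\beta\prod_{j\ne k}\nu_j$.

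\textbf{Step 3: the residual term.} This is where I expect the main obstacle. Since $f_i$ is multilinear in $(W_1,\dots,W_L)$, $\nabla^2 f_i$ has zero diagonal blocks. The $(k,\ell)$ off-diagonal block is the bilinear map $(U_k,U_\ell)\mapsto x_i^\top\cdots U_k\cdots U_\ell\cdots$, whose norm is at most $\prod_{j\notin\{k,\ell\}}\|W_j\|_2$ after the same whitening averaging. Contracting with $\nabla\ell_i$, of norm $\le\gamma$, bounds each block by $\gamma\prod_{j\notin\{k,\ell\}}\nu_j$. The block $(k,\ell)$ and its transpose $(\ell,k)$ both appear, which yields the factor $2$. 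Bounding the spectral norm of a symmetric block matrix by the sum of its block norms then gives $2\gamma\sum_{k<\ell}\prod_{j\notin\{k,\ell\}}\nu_j$.

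The delicate points are the following. First, justifying the averaging over $i$ with a single whitened covariance while a per-sample $\nabla\ell_i$ multiplies it; I would use Cauchy--Schwarz to factor out $\gamma$ and then $\Sigma_Z\approx I$, absorbing the approximation into the "$\approx$". Second, handling the $\|W\|_2\le\sqrt{\nu}\le\nu$ slack consistently. Adding the two bounds via the triangle inequality completes the proof.
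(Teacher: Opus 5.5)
Your proposal is correct and follows essentially the same route as the paper. DfI gives $\|W_k\|_2^2\le\nu_k$. The Gauss--Newton part is bounded blockwise: your PSD diagonal-block bound is the paper's $\|J\|_2^2\le\sum_k\|J_k\|_2^2$. Apply it to $\frac1n\sum_i J_i^\top J_i$ after using $|u^\top J_i^\top\nabla^2\ell_i J_i u|\le\beta\|J_iu\|^2$, since $\nabla^2\ell_i$ need not be PSD, and use the squared bound $\|W_j\|_2^2\le\nu_j$ there. The remainder is handled through multilinearity, zero diagonal blocks, and symmetric off-diagonal bilinear blocks giving the factor $2$.

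Your only deviation is pushing $\Sigma_Z$ through the composed gated linear map instead of propagating $\lambda_{\max}(\Sigma_{H_k})$ layer by layer, and it is harmless. Like the paper's Jacobian lemma, it relies on the gates $D_k$ being shared across samples, which is exactly what lets you pair $\frac1n\sum_i x_ix_i^\top=\Sigma_Z$ inside the quadratic forms.
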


\begin{theorem}[DfI controls effective rank]
\label{thm:dfi-general}
Let $Z\in \mathbb R^{n\times a}$ and $W\in \mathbb R^{a\times b}$ be an input and weight matrix, respectively, $\Phi=ZW$ be a feature matrix. Let $\Sigma_Z = \tfrac{1}{n} Z^\top Z$ be the empirical covariance of the inputs and
let $W = Q S$ be the right polar decomposition of $W$, where $Q \in \mathbb{R}^{a \times b}$ has orthonormal columns.
Then $\eta_1 \ge \cdots \ge \eta_d > 0$ denote the positive eigenvalues of $Q^\top \Sigma_Z Q$, 
with $d = \operatorname{rank}(Q^\top \Sigma_Z Q)$, and
 $\mathrm{srank}_\delta(\Phi)$ be defined from the nonzero singular values $\{\sigma_i(\Phi)\}_{i=1}^d$ by $\mathrm{srank}_\delta(\Phi)
= \min\Bigl\{k:\; ({\sum_{i=1}^k \sigma_i(\Phi)}/{\sum_{i=1}^d \sigma_i(\Phi)}) \ge 1-\delta \Bigr\}.$

Then $\varepsilon=\sqrt{\mathrm{DfI}(W)} < 1$ gives a lower bound on the srank as left inequality below. If additionally $\Sigma_Z = I$, the bound simplifies to the right inequality below.
\begin{equation}\label{eq:th1_srank_bounds}
\mathrm{srank}_\delta(\Phi) \;\;\ge\;\;
\left\lceil
\frac{(1-\delta)\,d}{\delta\,\sqrt{\tfrac{1+\varepsilon}{1-\varepsilon}}\;\sqrt{\tfrac{\eta_1}{\eta_d}} + (1-\delta)}
\right\rceil,
\qquad
\mathrm{srank}_\delta(\Phi) \;\;\ge\;\;
\left\lceil
\frac{(1-\delta)\,d}{\delta\,\sqrt{\tfrac{1+\varepsilon}{1-\varepsilon}} + (1-\delta)}
\right\rceil.
\end{equation}
\end{theorem}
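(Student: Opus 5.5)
The plan is to reduce the statement to a condition-number bound on the nonzero singular values of $\Phi$, and then to use a purely combinatorial fact: if $d$ positive numbers all lie in an interval $[m,\kappa m]$, then no small subset of them can carry almost all of their sum.

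\emph{Step 1 (spectrum of $S$).} Since $W=QS$ with $Q^\top Q=I$, we have $W^\top W=S^2$. Using $\|S^2-I\|_2\le\|S^2-I\|_F=\varepsilon<1$, every eigenvalue of the positive semidefinite factor $S$ lies in $[\sqrt{1-\varepsilon},\sqrt{1+\varepsilon}]$. In particular $S$ is invertible and its condition number satisfies
\[
\kappa(S)\le\sqrt{\tfrac{1+\varepsilon}{1-\varepsilon}}.
\]

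\emph{Step 2 (singular values of $\Phi$).} First rewrite the Gram matrix:
\[
\Phi^\top\Phi=nS\,(Q^\top\Sigma_Z Q)\,S=n\,(AS)^\top(AS),
\qquad A:=\Sigma_Z^{1/2}Q .
\]
So the singular values of $\Phi$ are $\sqrt n$ times those of $AS$. The nonzero singular values of $A$ are exactly $\sqrt{\eta_1}\ge\dots\ge\sqrt{\eta_d}$.

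Because $S$ is invertible, $\operatorname{rank}(AS)=\operatorname{rank}(A)=d$. The multiplicative singular-value inequality
\[
\sigma_i(A)\,\sigma_{\min}(S)\le\sigma_i(AS)\le\sigma_i(A)\,\|S\|_2
\]
(Courant–Fischer, or Horn's inequality) then shows that all nonzero singular values of $\Phi$ lie in an interval $[m,\kappa m]$ with
\[
\kappa\le\sqrt{\tfrac{\eta_1}{\eta_d}}\sqrt{\tfrac{1+\varepsilon}{1-\varepsilon}}.
\]
When $\Sigma_Z=I$ we get $\eta_1=\eta_d=1$, and this factor reduces to $\sqrt{(1+\varepsilon)/(1-\varepsilon)}$. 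This is the source of the right-hand inequality in \eqref{eq:th1_srank_bounds}.

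\emph{Step 3 (counting).} Let $k=\mathrm{srank}_\delta(\Phi)$. Write $T_k=\sum_{i\le k}\sigma_i$ and $R=\sum_{i>k}\sigma_i$ for the tail. By definition of srank, $T_k\ge(1-\delta)(T_k+R)$, which rearranges to
\[
R\le\tfrac{\delta}{1-\delta}\,T_k .
\]
The tail has $d-k$ terms, each at least $m$, and each of the first $k$ terms is at most $\kappa m$. Hence
\[
(d-k)\,m\le\tfrac{\delta}{1-\delta}\,k\kappa m .
\]
Solving for $k$ gives $k\ge(1-\delta)d/(\delta\kappa+1-\delta)$. Since $k$ is an integer we may take the ceiling, and substituting the bound on $\kappa$ from Step 2 yields the left inequality in \eqref{eq:th1_srank_bounds}.

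\emph{Main obstacle.} The only delicate point is Step 2. We must justify that multiplying by the well-conditioned $S$ keeps the number of nonzero singular values equal to $d$, and that it distorts each of them by at most the factors $\sigma_{\min}(S)$ and $\|S\|_2$. For this I would use the variational characterisation
\[
\sigma_i(AS)=\max_{\dim V=i}\ \min_{x\in V,\ \|x\|=1}\|ASx\|,
\]
together with $\sigma_{\min}(S)\|x\|\le\|Sx\|\le\|S\|_2\|x\|$, applied to the change of variables $y=Sx$, which preserves subspace dimensions because $S$ is invertible. The rest is algebra.
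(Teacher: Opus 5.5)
Your proposal is correct and follows the paper's route: the DfI bound confines the spectrum of $S$ to $[\sqrt{1-\varepsilon},\sqrt{1+\varepsilon}]$, which gives the condition-number bound $\sqrt{\tfrac{1+\varepsilon}{1-\varepsilon}}\sqrt{\tfrac{\eta_1}{\eta_d}}$ on the nonzero singular values of $\Phi$, and your counting inequality $(d-k)m\le\tfrac{\delta}{1-\delta}k\kappa m$ is exactly the paper's ``worst-case allocation'' step. Your max--min argument with the dimension-preserving substitution $y=Sx$ is in fact a cleaner justification of $\sigma_d(\Phi)\ge\sqrt{n(1-\varepsilon)\eta_d}$ and $\operatorname{rank}\Phi=d$ than the paper's Rayleigh quotient over $(\ker\Sigma_\Phi)^\perp$, because for $x$ in that subspace $Sx$ can have a component in $\ker(Q^\top\Sigma_Z Q)$.
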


\begin{theorem}[Minimizing DfI increases neuron activity score]
\label{thm:dfi-dormancy}
Let $W \in \mathbb{R}^{a \times b}$ and let $\sigma$ be positive-homogeneous ($\sigma(\alpha t)=\alpha\sigma(t)$ for $\alpha\ge 0$).
Assume the input vector $z \sim \mathcal{N}(0,I_a)$ (isotropic Gaussian).
For neuron $j$ with column $w_j$ of $W$, define activity score of neuron $j$ as
$
s_j = \frac{\mathbb{E}_{z}[|\sigma(\langle z, w_j \rangle)|]}{\frac{1}{b}\sum_{k=1}^b \mathbb{E}_{z}[|\sigma(\langle z, w_k \rangle)|]}.$
If $\varepsilon := \sqrt{\mathrm{DfI}(W)} < 1$, then for all $j$,
\[
\sqrt{\frac{1-\varepsilon}{\,1+\varepsilon\,}} \;\le\; s_j \;\le\; \sqrt{\frac{1+\varepsilon}{\,1-\varepsilon\,}}.
\]
\end{theorem}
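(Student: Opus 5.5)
The plan is to reduce everything to column norms. For $z\sim\mathcal N(0,I_a)$, the scalar $\langle z,w_j\rangle$ is distributed as $\|w_j\|_2\,g$ with $g\sim\mathcal N(0,1)$. Positive homogeneity then gives, for $w_j\neq 0$,
\[
\mathbb{E}_z|\sigma(\langle z,w_j\rangle)| = \mathbb{E}_g|\sigma(\|w_j\|_2 g)| = \|w_j\|_2\,\mathbb{E}_g|\sigma(g)| =: \|w_j\|_2\, c_\sigma .
\]
This also holds trivially when $w_j=0$, since $\sigma(0)=0$ by homogeneity. Assuming $c_\sigma>0$, which holds for any nonzero positive-homogeneous $\sigma$ such as ReLU or leaky ReLU, the constant cancels in the ratio. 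We get
\[
s_j=\frac{\|w_j\|_2}{\tfrac1b\sum_k\|w_k\|_2}.
\]
So the theorem reduces to a purely deterministic statement about the column norms of $W$.

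Next I would relate the column norms to DfI. The diagonal entries of $W^\top W-I$ are $\|w_k\|_2^2-1$. Each diagonal entry is bounded in absolute value by the Frobenius norm, so
\[
\bigl|\|w_k\|_2^2-1\bigr|\le \|W^\top W-I\|_F=\varepsilon .
\]
Hence $1-\varepsilon\le\|w_k\|_2^2\le 1+\varepsilon$ for every $k$. Because $\varepsilon<1$, all columns are nonzero, and $\sqrt{1-\varepsilon}\le\|w_k\|_2\le\sqrt{1+\varepsilon}$.

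Finally I would bound the ratio. The average $\frac1b\sum_k\|w_k\|_2$ also lies in $[\sqrt{1-\varepsilon},\sqrt{1+\varepsilon}]$. For the upper bound, take the largest possible numerator over the smallest possible denominator, which gives $s_j\le\sqrt{(1+\varepsilon)/(1-\varepsilon)}$. The lower bound follows symmetrically.

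The only delicate point is the degenerate case $c_\sigma=0$, which occurs only for $\sigma\equiv0$ on the relevant half-lines. I would state explicitly that this case is excluded, since otherwise $s_j$ is undefined. Everything else is routine.
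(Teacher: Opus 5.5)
Your proposal is correct and follows essentially the same route as the paper: both reduce $s_j$ to $\|w_j\|_2/\bigl(\tfrac1b\sum_k\|w_k\|_2\bigr)$ via isotropy and positive homogeneity, bound each diagonal entry $\|w_k\|_2^2-1$ of $W^\top W-I$ by $\varepsilon$, and then bound the ratio using the extreme values of numerator and average. Your explicit note that $c_\sigma>0$ requires excluding $\sigma\equiv 0$ is a small improvement, since the paper only asserts $c_\sigma>0$.
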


\cite{sokar2023dormant} classified neurons with $s_j < \tau$ as dormant neurons. Hence, reducing the number of dormant neurons requires increasing the activity scores $s_j$. Theorem~\ref{thm:dfi-dormancy} states that minimizing DfI increases the lower bound $\sqrt{\frac{1-\epsilon}{1+\epsilon}} < s_j$. Note that minimizing DfI also decreases the corresponding upper bound. This is particularly meaningful because the neuron activity score $s_j$ is a relative measure, defined as an activation normalized by the mean activation within the same layer. Consequently, it is impossible to increase all $s_j$ simultaneously, since they are normalized by their average. To reduce dormant neurons, it is therefore critical to reduce the discrepancy in activations across neurons rather than uniformly scaling them. Theorem~\ref{thm:dfi-dormancy} supports this perspective: minimizing DfI tightens both the lower and upper bounds on $s_j$, thereby limiting the score discrepancy between neurons and effectively reduces dormant neurons.

We provided detailed proofs in Appendix~\ref{appendix_proof}. Theorem~\ref{thm:curvature}, \ref{thm:dfi-general}, \ref{thm:dfi-dormancy} demonstrates that reducing DfI is directly associated with maintaining a smoother loss landscape, a high effective rank, and large number of active units. 

\subsection{Balancing between stability and plasticity}
\label{section:balance}

To achieve low DfI while minimizing the loss of information, 
we formulate the problem as a constrained optimization. Specifically, we minimize the SFE between the original weights $W$ and their orthogonalized counterpart $\widetilde{W}$, subject to the orthogonality constraint $\widetilde{W}^\top \widetilde{W} = I$, which is equivalent to requiring $\mathrm{DfI}(\widetilde{W}) = \|\,\widetilde{W}^\top \widetilde{W} - I\,\|_F^2 = 0$.
\begin{equation}
\min_{\widetilde{W}} \ \|W - \widetilde{W}\|_F^2
\quad \text{s.t.} \quad \widetilde{W}^\top \widetilde{W} = I.
\label{eq:projection_problem}
\end{equation}
This formulation is mathematically equivalent to the well-studied \emph{Orthogonal Procrustes Problem} \citep{schonemann1966generalized}, whose solution can be expressed in closed form via the polar decomposition:
\begin{equation}
\widetilde{W}^\star = W \big(W^\top W\big)^{-\tfrac{1}{2}}.
\label{eq:orthogonalization_solution}
\end{equation}
While the optimization itself is classical, our contribution lies in leveraging this operation as a principled mechanism to balance stability and  plasticity in neural networks. 
In particular, we reinterpret \eqref{eq:orthogonalization_solution} as a projection that simultaneously drives the spectrum of $W$ toward isotropy (low DfI) while maintaining stability (low SFE). We provide a derivation of Equation~\ref{eq:orthogonalization_solution} in the Appendix~\ref{appendix_proof}.

\subsection{Approximating the solution}

Directly computing $\widetilde{W}^\star$ exactly can be expensive, we efficiently approximate it using the Newton--Schulz iteration, 
making the approach scalable to large networks.

\begin{minipage}[t]{0.54\linewidth}

Our method, FIRE, orthogonalizes neural network weights after training on the current dataset but before learning on new data, using the Newton--Schulz iteration (Shown in Algorithm \ref{algo:newton_schulz}.) to efficiently approximate the above solution. Specifically, given a weight matrix $W \in \mathbb{R}^{m \times n}$, we apply the Newton--Schulz update defined as $X_{k+1} = a X_k + b X_k (X_k^\top X_k)$ with constants $a=1.5$ and $b=-0.5$, where $X_0 = W / \|W\|_F$. This iterative process progressively drives the singular values of $W$ toward $1$, thereby enforcing approximate orthonormality. 

\end{minipage}\hfill
\begin{minipage}[t]{0.43\linewidth}

{\footnotesize
    \vspace{-4mm}
    \begin{algorithm}[H]
        \spaceskip=3pt plus 0pt minus 1pt
        \caption{Newton--Schulz Iteration (Pytorch-like)}
        \PyComment{X: a two-dimensional matrix} \\
        \PyComment{N: number of iteration} \\
        \PyCode{a, b = (1.5, -0.5)} \\
        
        \PyCode{X = X / X.norm()} \\
        \PyDef{for} \PyCode{\_ \text{ in } range(N):} \\
        \Indp
        \PyCode{A = X.T @ X} \\
        \PyCode{X = a * X + b * (X @ A)} \\
        \Indm
        
        \PyDef{return} \PyCode{X}
    
    \label{algo:newton_schulz}
    \end{algorithm}
    \vspace{-1mm}
}

\end{minipage}


In convolutional layers, the update is applied kernel-wise along the spatial dimensions, ensuring that each convolutional filter is orthogonalized independently. 

\section{Experiments}

To demonstrate the effectiveness of FIRE, we evaluated it on three settings: continual visual learning, continual pretraining of LLMs, and reinforcement learning.

\subsection{Continual Visual Learning}
\label{section:continual_visual}

\begin{figure}[t]
\centering
  \includegraphics[width=\linewidth]{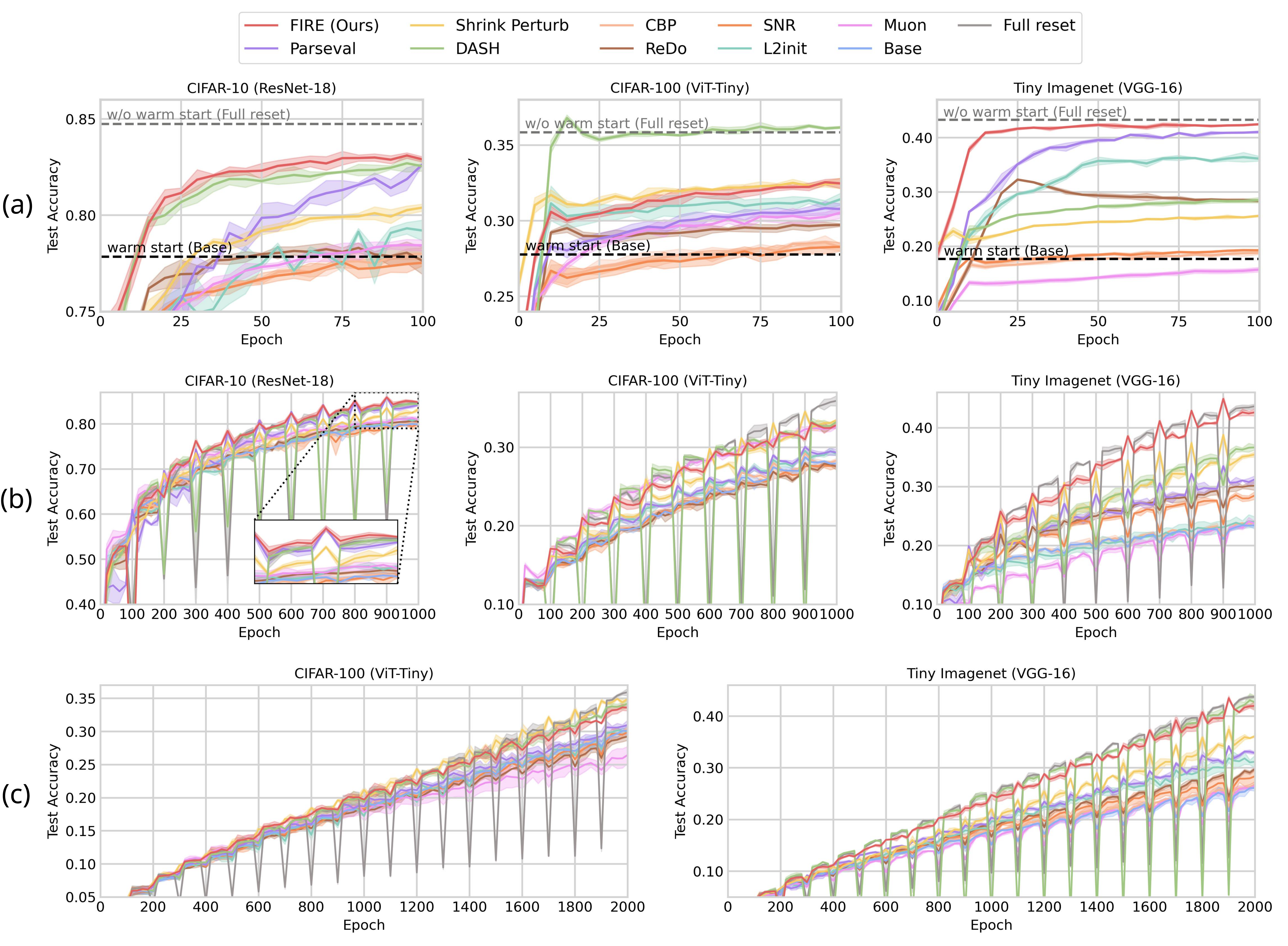}
\caption{\textbf{Continual visual learning results.} Warm-start setting (a): training begins with only 10\% of the data before continuing on the full dataset. Continual setting (b): the dataset is revealed in ten stages, expanding from 10\% to 100\% in 10\% increments. Class-incremental setting (c): new classes are introduced over 20 phases, with an equal number of classes added at each phase.
}

\label{fig:ws_cf_ci}
\end{figure}

In continual visual learning experiments, we evaluated FIRE on various dataset–architecture pairs: the CIFAR-10 dataset with ResNet-18, the CIFAR-100 dataset with ViT-Tiny, and the Tiny ImageNet dataset with VGG-16. 

To evaluate both the ability of FIRE to recover plasticity and its capacity to restore performance after a reset, we compare against two representative reset-based baselines: S\&P \citep{ash2020warm}, and DASH \citep{shin2024dash}. Also, to evaluate the impact of regularization on the convergence speed, we adopt Parseval regularization \citep{chung2024parseval}, which constrains the weights to remain close to orthogonal, as a baseline. We also used L2init~\citep{kumar2025maintaining} as a baseline, which is a representative regularization-based method. Node-resetting methods such as Continual Backprop (CBP)~\citep{dohare2024loss}, Self-Normalized Resets (SNR)~\citep{farias2024self}, and Recycling Dormant Neurons (ReDo)~\citep{sokar2023dormant} are employed as baselines. In addition, we use the Muon optimizer~\citep{jordan6muon} as a baseline, as it also employs the Newton–Schulz iteration. While FIRE minimizes the Deviation from Isometry (DfI) when resetting, Parseval regularization enforces the same constraint continuously throughout training. Thus, both FIRE and Parseval regularization share the same underlying optimization objective. Detailed experiment settings for continual visual learning are provided in Appendix~\ref{app:detailed_settings_cvl}.

Following \citet{ash2020warm} and \citet{lee2024slow}, we evaluate our approach in the warm-start setting, where the model is first trained on a subset of the dataset and then on the full dataset. Since plasticity loss is most severe when the subset ratio is small \citep{lee2024slow}, we warm-start with only 10\% of the data before continuing on the entire dataset. As shown in Figure \ref{fig:ws_cf_ci} (a), FIRE provides consistent performance gains across all three benchmarks. In particular, on CIFAR-10 with ResNet-18 and Tiny ImageNet with VGG-16, FIRE outperforms all baselines, demonstrating a strong ability to recover plasticity. On CIFAR-100 with ViT-Tiny, the improvement is less pronounced. However FIRE still outperforms all other baselines except DASH, and competitive to S\&P. In this setting, DASH, which employs a data-dependent shrinking strategy, proves especially effective, suggesting that guidance from data can be beneficial when reinitializing transformer architectures. Notably, Parseval regularization and L2init converges more slowly than FIRE, particularly on CIFAR-10 with ResNet-18 and Tiny ImageNet with VGG-16. This suggests that continuously enforcing the constraint during training can hinder convergence, leading to longer training and incurring additional computational cost.

To examine whether these findings hold in a setting where data are continuously added, which is a more realistic and natural setting, we evaluate FIRE in the continual setting \citep{lee2024slow}. Here, training is divided into ten stages, starting with 10\% of the dataset and adding an additional 10\% at each stage. In this way, data gradually expand from 10\% to the full 100\%. As shown in Figure~\ref{fig:ws_cf_ci} (b), FIRE delivers consistent gains across all datasets. The improvements are particularly pronounced on CIFAR-10 with ResNet-18 and Tiny ImageNet with VGG-16, while on CIFAR-100 with ViT-Tiny it achieves performance comparable to the best alternatives. In contrast, full reset and DASH suffer a sharp drop immediately after each reset, and although S\&P avoids such drops, its performance remains suboptimal compared to FIRE. In contrast, FIRE incurs only a slight or negligible drop, suggesting that it successfully balances stability and plasticity, thereby achieving high performance with minimal drop in performance.


To assess the effectiveness of FIRE under large distribution shifts, we conducted experiments in a class-incremental learning scenario, which is widely used setup in the continual learning literature \citep{rebuffi2017icarl, dohare2024loss, lewandowski2024plastic}. New classes were gradually introduced at regular intervals. The training process was divided into 20 phases, with an equal number of classes added in each phase. Since CIFAR-10 does not contain a sufficient number of classes for this setting, we exclude it in this experiment.
Figure~\ref{fig:ws_cf_ci} (c) reports the results in the class-incremental setting. Consistent with our earlier findings, FIRE shows strong performance without exhibiting a performance drop after resets by effectively balancing stability and plasticity, and full reset and DASH show sharp drop after reset while S\&P show suboptimal performance. 

Node-resetting methods such as CBP, SNR, and ReDo show poor overall performance. This is consistent with \citet{lee2024slow}, which found that methods aiming to improve plasticity by maintaining trainability provide only limited gains in generalization. The Muon optimizer also performs poorly overall, suggesting that periodically reinitializing weights using the Newton–Schulz iteration (FIRE) is substantially more effective than applying this iteration to the gradients (Muon).

\subsection{continual pretraining of LLMs}

\begin{figure}[!t]
\centering
\includegraphics[width=\linewidth]{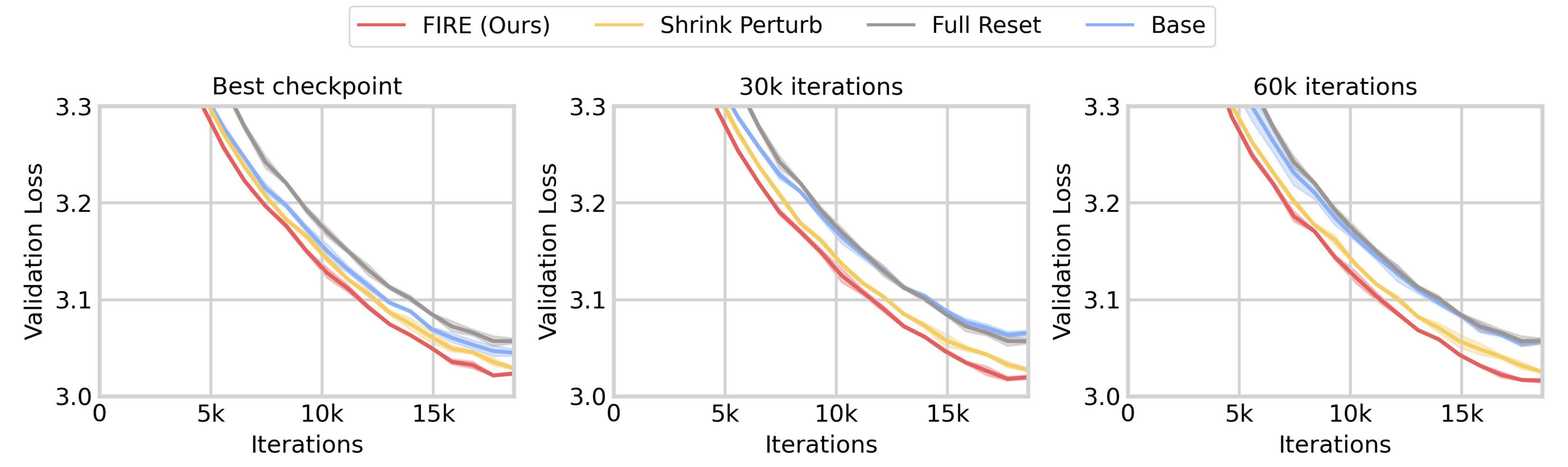}
\caption{\textbf{Continual pretraining of GPT-0.1B.} Models are first pretrained on WikiText-103 and then continually trained on a new dataset consisting of a mixture of OpenWebText and WikiText-103. From left to right, results correspond to models initialized from the best checkpoint during pretraining, from 30k pretraining iterations, and from 60k pretraining iterations.}

\label{fig:llm}
\end{figure}

\textbf{Setup.} We also tested FIRE in the continual pretraining of LLMs. We first pretrained a GPT-0.1B model on WikiText-103 and then trained on a combination of OpenWebText and WikiText-103. For the second phase, we used the best, 30k, and 60k checkpoints from initial pretraining to examine how plasticity loss worsens beyond the best checkpoint and how effectively FIRE mitigates this degradation at different stages. We present the detailed settings for the LLM experiments in Appendix~\ref{app:detailed_settings_cpl}.

\textbf{Results.} As shown in Figure~\ref{fig:llm}, the gap between the base model and full reset narrows as pretraining progresses, since the base model’s validation loss increases with longer training. This aligns with prior findings that plasticity loss becomes more severe as pretraining duration grows \citep{ash2020warm}. While S\&P improves performance by moving parameters toward intermediate trade-off points between stability and plasticity, it remains suboptimal compared to FIRE, which achieves a more principled balance. Notably, FIRE was applied without any tuning, using a fixed 5 iterations, whereas S\&P was carefully tuned over varying reinitialization degrees. Moreover, while the performance of the base model deteriorates with longer pretraining, FIRE maintains strong performance even when initialized from the 60k checkpoint. This demonstrates that FIRE can effectively balance the stability–plasticity trade-off even under severe plasticity loss. 

In addition, unlike in continual visual learning (Section~\ref{section:continual_visual}), full reset performs poorly in this setting. The main reason is the lack of stability inherent to full reset. Consequently, the full resetted model cannot outperform the base model, even though the base model itself already suffers from plasticity loss. In other words, the instability introduced by erasing all past information outweighs the potential benefit of restoring plasticity. These findings indicate that full parameter resetting is not an effective strategy for mitigating plasticity loss in continual pretraining of LLMs. Instead of providing a stable improvement, it wastes useful prior knowledge and leads to extreme inefficiency, making it an impractical choice in this setting.

\subsection{reinforcement learning}

\begin{figure}[!t]
    \centering
    \includegraphics[width=\linewidth]{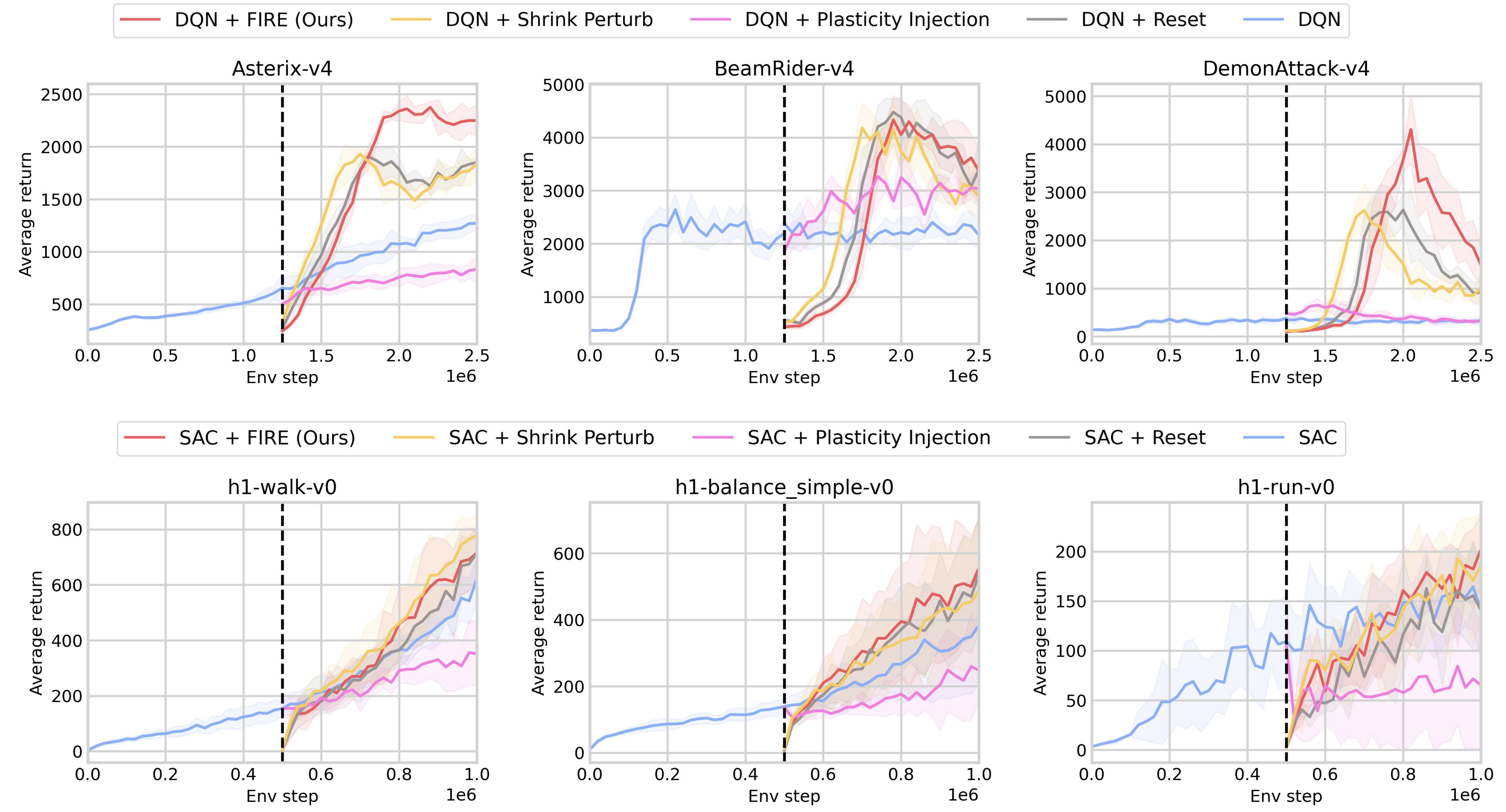}
    \caption{\textbf{Reinforcement learning results.} Discrete control with DQN on three Atari environments that suffer from severe plasticity loss (a) and continuous control with SAC on three HumanoidBench tasks (b). The black dashed line indicates the point at which reinitialization is applied.}
    \label{fig:rl_results}
\end{figure}

\textbf{Setup.} Finally, we evaluated FIRE in reinforcement learning. We evaluated the effectiveness of FIRE in a high Replay Ratio (RR) setting \citep{nikishin2022primacy,sokar2023dormant}, where loss of plasticity is severe and acts as a critical bottleneck for sample efficiency.
For a comprehensive evaluation, we consider both continuous and discrete control environments.
For discrete control, we focus on three Atari 2600 \citep{bellemare2013arcade} games (Asterix, BeamRider, and DemonAttack), which have been reported to suffer from severe plasticity loss \citep{sokar2023dormant}. We use standard nature CNN with DQN algorithm \citep{mnih2015human}. For continuous control, we choose three primary tasks from HumanoidBench \citep{mnih2015human}: balance, walk, and run. We use SimBa \citep{lee2024simba} with SAC algorithm \citep{haarnoja2018soft} as our baseline, whose replay ratio has failed to scale beyond 1 without resets \citep{lee2025simbav2}. We considered three baselines: full reset, Shrink and Perturb (S\&P)~\citep{ash2020warm, d2022sample}, and Plasticity Injection~\citep{nikishin2023deep}. 
To eliminate performance differences caused by randomness before reinitialization, we reinitialized the network using the same checkpoint and replay buffer.

\textbf{Results.} As shown in Figure~\ref{fig:rl_results}, FIRE achieves superior or competitive performance across environments compared to S\&P, Plasticity Injection, and full reset. In DQN, FIRE consistently outperforms S\&P, surpasses full reset in Asterix, and remains competitive in other environments. Although S\&P provides a slight improvement in convergence speed, it is still suboptimal relative to both full reset and FIRE. In continuous control tasks, S\&P performs competitively, but it falls short of FIRE in all Atari environments. Plasticity Injection, which introduces additional parameters to balance stability and plasticity, shows poor performance across discrete and control tasks. 
These results suggest that manually tuning hyperparameters to balance stability and plasticity is less effective in visual reinforcement learning—where plasticity loss is particularly severe—than our principled approach, FIRE, which explicitly balances the two.

\vspace{-3mm}

\subsection{Ablation study}

\begin{figure}[h]
\centering
\includegraphics[width=\linewidth]{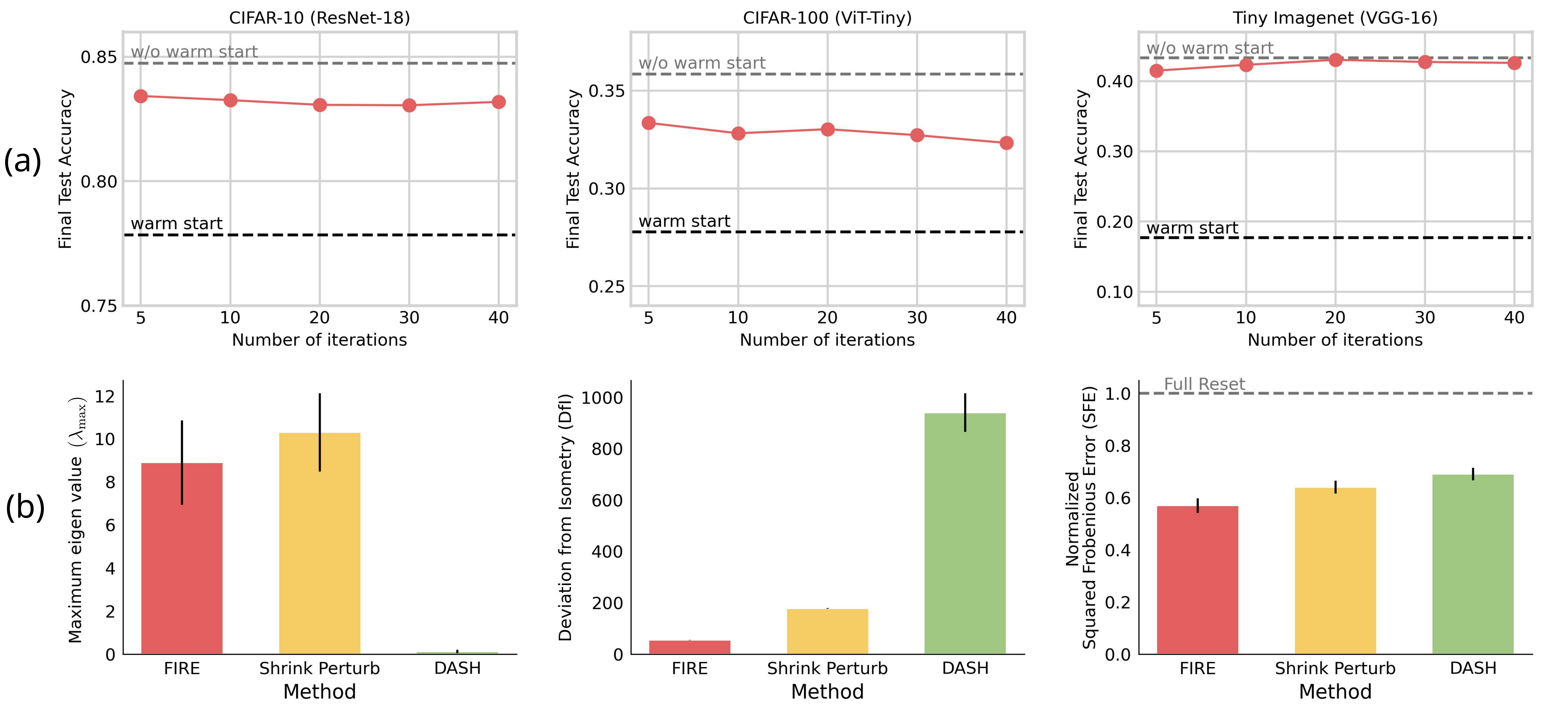}
\caption{\textbf{Ablation study results.} Final performance of FIRE with varying numbers of iterations for Netwon-Schulz algorithm (a). Comparison of FIRE and baselines in terms of loss curvature (maximum eigenvalue of the Hessian), plasticity (DfI), and stability (normalized SFE) (b).}

\label{fig:ablation}
\end{figure}

To better understand the underlying factor of FIRE's strong performance, we conducted an ablation study. To verify whether FIRE indeed effectively balances stability and plasticity, we evaluated the stability metric (SFE) and the plasticity metric (DfI), and compared FIRE against reinitialization baselines. In addition, we measured the loss landscape curvature with respect to upcoming data immediately after a reset, to examine whether our theoretical findings are also reflected in practice.

As shown in Figure~\ref{fig:ablation} (b), FIRE achieves the lowest DfI while maintaining the lowest SFE, which suggests that FIRE successfully balances stability and plasticity in practice. Moreover, FIRE produces a smoother loss landscape compared to S\&P, while still preserving a lower SFE. This indicates that our theoretical insights on DfI and loss curvature are indeed manifested in practice. Although DASH is particularly effective in smoothing the loss landscape, it also exhibits the highest SFE, which may contribute to an erasure of useful learned knowledge, thereby leading to instability after reset. 

In addition, we evaluated FIRE with various hyperparameters in the warm-start setting to assess its sensitivity. The only hyperparameter in FIRE is the number of iterations for the Newton–Schulz iteration. As the number of iterations increases, we obtain a more accurate estimate of the solution to the constrained optimization problem discussed in Section~\ref{section:balance}. Therefore, our interest is to identify the minimum number of iterations that provides a sufficiently accurate estimate of the solution to yield performance benefits. As shown in Figure~\ref{fig:ablation} (a), FIRE is highly robust to the number of iterations and already provides strong performance gains even with as few as five iterations.

\vspace{-1mm}

\section{Conclusion}

In this work, we addressed stability–plasticity trade-off, which is the long-standing problem in continual learning, by introducing FIRE. By approaching stability-plasticity tradeoff as a constrained optimization problem, FIRE enables a principled reinitialization without heavy hyperparameter tuning. Across continual visual learning, reinforcement learning, and language learning benchmarks, FIRE achieved superior or competitive performance, underscoring the importance of effective stability–plasticity management for advancing continual learning.

The main limitation of our work is the assumption of access to past data. Since our focus is on balancing stability and plasticity when such access is available, we did not evaluate FIRE under restricted data scenarios. Future work should, therefore, examine FIRE under restricted access to past data. In addition, we only used relatively small models for continual pretraining of LLMs. Evaluating FIRE on the larger models and applying FIRE not only pretraining, but also continual fine-tuning of LLMs can be a promising direction for future works.

\subsubsection*{Reproducibility Statement}
We provide hyperparameter configurations and implementation details of our experiments in Appendix~\ref{app:detailed_settings}. The core algorithmic part of our method is described in Algorithm~\ref{algo:newton_schulz}. The proofs and assumption of theoretical works provided in this paper are described in Appendix~\ref{appendix_proof}.

\subsubsection*{Acknowledgments}
This work was supported by the National Research Foundation of Korea (NRF) grant funded by the Korea government (MSIT) (RS-2025-16902996). This research was financially supported by the Ministry of Trade, Industry and Energy (MOTIE) and Korea Institute for Advancement of Technology (KIAT) through the "International Cooperative R\&D program" (Grant No. P0028435). This work was supported by Institute of Information \& communications Technology Planning \& Evaluation (IITP) grant funded by the Korea government (MSIT) (No.2019-0-01842, Artificial Intelligence Graduate School Program (GIST)). We appreciate the high-performance GPU computing support of HPC-AI Open Infrastructure via GIST SCENT.

\bibliography{iclr2026_conference}
\bibliographystyle{iclr2026_conference}

\clearpage

\appendix

\section{Proof of Theorems}
\label{appendix_proof}

\paragraph{Proof of Theorem \ref{thm:feature_covariance}}
\begin{proof}
Unless explicitly subscripted, $\|\cdot\|$ denotes the Frobenius norm $\|\cdot\|_F$, and $\|\cdot\|_2$ denotes the spectral (operator) norm. We also recall
\[
\mathrm{SFE}(\Theta,\widetilde{\Theta}) \;:=\; \sum_{j=1}^{L}\|W^j-\widetilde{W}^j\|_F^2.
\]

\medskip
We will use two elementary facts.

For any $F,\widetilde F$ with $m:=\min\{\|F\|,\|\widetilde F\|\}>0$,
\[
\left\|\frac{F}{\|F\|}-\frac{\widetilde F}{\|\widetilde F\|}\right\|
\;\le\; \frac{2}{m}\,\|F-\widetilde F\|.
\]

If arbitrary two matrix $U$ and $V$ satisfies $\|U\|=\|V\|=1$ then
\[
\|UU^\top - VV^\top\| \;\le\; 2\,\|U-V\|.
\]

Combining these two facts with $U:=F/\|F\|$ and $V:=\widetilde F/\|\widetilde F\|$ yields
\begin{equation}\label{eq:norm-gram}
\left\|\frac{FF^\top}{\|F\|^2} - \frac{\widetilde F\,\widetilde F^\top}{\|\widetilde F\|^2}\right\|
\;\le\; \frac{4}{m}\,\|F-\widetilde F\|.
\end{equation}

Applying \eqref{eq:norm-gram} with $F=H_\Theta^{\,\ell}(Z)$ and $\widetilde F=H_{\widetilde \Theta}^{\,\ell}(Z)$, and $m_\ell:=\min\{\|H_\Theta^{\,\ell}(Z)\|,\|H_{\widetilde \Theta}^{\,\ell}(Z)\|\}$ gives
\begin{equation}\label{eq:step1}
\|C_\Theta^{\,\ell}(Z)-C_{\widetilde \Theta}^{\,\ell}(Z)\|
\;\le\; \frac{4}{m_\ell}\,\|H_\Theta^{\,\ell}(Z)-H_{\widetilde \Theta}^{\,\ell}(Z)\|.
\end{equation}

\medskip
Introduce the \emph{hybrid} outputs $H_{(\le j)}^{\,\ell}(Z)$: the first $j$ layers use $\Theta$ and the remaining $j+1,\dots,\ell$ layers use $\widetilde \Theta$. Note that $H_{(\le \ell)}^{\,\ell}(Z) = H_{\Theta}^{\,\ell}(Z)$ and $H_{(\le 0)}^{\,\ell}(Z) = H_{\widetilde \Theta}^{\,\ell}(Z)$. Then
\[
H_\Theta^{\,\ell}(Z)-H_{\widetilde \Theta}^{\,\ell}(Z)
=\sum_{j=1}^{\ell}\Big(H_{(\le j)}^{\,\ell}(Z)-H_{(\le j-1)}^{\,\ell}(Z)\Big),
\]
so by the triangle inequality,
\begin{equation}\label{eq:telescope-sum}
\|H_\Theta^{\,\ell}(Z)-H_{\widetilde \Theta}^{\,\ell}(Z)\|
\;\le\;\sum_{j=1}^{\ell}\|H_{(\le j)}^{\,\ell}(Z)-H_{(\le j-1)}^{\,\ell}(Z)\|.
\end{equation}

Each summand differs in \emph{only the $j$-th layer weights}. Let $X_{j-1}:=H_\Theta^{\,j-1}(Z)$ denote the shared input fed to layer $j$ in both hybrids. Consider the \emph{backend subnetwork}
\[
\mathcal T^{(\widetilde \Theta)}_{j\to \ell}(Y)
:=\sigma_\ell\big(\cdots \sigma_{j+1}(Y\,\widetilde W^{\,j+1})\cdots \,\widetilde W^{\,\ell}\big).
\]
For arbitrary $Y_1$ and $Y_2$, its input-Lipschitz constant is
\begin{equation}\label{eq:backend-Lip}
\big\|\mathcal T^{(\widetilde \Theta)}_{j\to \ell}(Y_1)-\mathcal T^{(\widetilde \Theta)}_{j\to \ell}(Y_2)\big\|
\;\le\;\Big(\prod_{k=j+1}^{\ell}L_{\sigma_k}\,\|\widetilde W^{\,k}\|_2\Big)\,\|Y_1-Y_2\|.
\end{equation}
Hence
\begin{align}
\|H_{(\le j)}^{\,\ell}(Z)-H_{(\le j-1)}^{\,\ell}(Z)\|
&\le \Big(\prod_{k=j+1}^{\ell}L_{\sigma_k}\,\|\widetilde W^{\,k}\|_2\Big)\,
      L_{\sigma_j}\,\|X_{j-1}\|\,\|W^j-\widetilde W^{\,j}\|. \label{eq:one-layer-term}
\end{align}
Meanwhile,
\begin{equation}\label{eq:forward-accum}
\|X_{j-1}\|=\|H_\Theta^{\,j-1}(Z)\|
\;\le\;\|Z\|\,\prod_{k=1}^{j-1}L_{\sigma_k}\,\|W^k\|_2.
\end{equation}

\medskip

Combining
\eqref{eq:telescope-sum}, \eqref{eq:one-layer-term}, and \eqref{eq:forward-accum} yields
\[
\|H_\Theta^{\,\ell}(Z)-H_{\widetilde \Theta}^{\,\ell}(Z)\|
\;\le\;
\|Z\|\Big(\prod_{k=1}^{\ell}L_{\sigma_k}\Big)\,
\sum_{j=1}^{\ell}\Big(\prod_{k\neq j}B_k\Big)\,\|W^j-\widetilde W^{\,j}\|.
\]
By Cauchy--Schwarz,
\[
\sum_{j=1}^{\ell}\Big(\prod_{k\neq j}B_k\Big)\,\|W^j-\widetilde W^{\,j}\|
\;\le\;
B_\Pi^\ell\Bigg(\sum_{j=1}^{\ell}\frac{1}{B_j^2}\Bigg)^{\!1/2}
\sqrt{\mathrm{SFE}(\Theta,\widetilde \Theta)}.
\]

Therefore,
\begin{equation}\label{eq:step3}
\|H_\Theta^{\,\ell}(Z)-H_{\widetilde \Theta}^{\,\ell}(Z)\|
\;\le\;
\|Z\|\Big(\prod_{k=1}^{\ell}L_{\sigma_k}\Big)\,
B_\Pi^\ell\Bigg(\sum_{j=1}^{\ell}\frac{1}{B_j^2}\Bigg)^{\!1/2}
\sqrt{\mathrm{SFE}(\Theta,\widetilde \Theta)}.
\end{equation}

\medskip

Substitute \eqref{eq:step3} into \eqref{eq:step1}:
\[
\|C_\Theta^{\,\ell}(Z)-C_{\widetilde \Theta}^{\,\ell}(Z)\|
\;\le\;
\frac{4\,\|Z\|}{m_\ell}\,
\Big(\prod_{k=1}^{\ell}L_{\sigma_k}\Big)\,
B_\Pi^\ell\Bigg(\sum_{j=1}^{\ell}\frac{1}{B_j^2}\Bigg)^{\!1/2}
\sqrt{\mathrm{SFE}(\Theta,\widetilde \Theta)}.
\]
Squaring both sides gives
\[
\|C_\Theta^{\,\ell}-C_{\widetilde \Theta}^{\,\ell}\|^2
\;\le\;
\frac{16\,\|Z\|^2}{m_\ell^2}
\Big(\prod_{k=1}^{\ell}L_{\sigma_k}\Big)^{\!2}
(B_\Pi^\ell)^2\Bigg(\sum_{j=1}^{\ell}\frac{1}{B_j^2}\Bigg)\,
\mathrm{SFE}(\Theta,\widetilde \Theta).
\]

\medskip
If $L_{\sigma_k}\le1$ and $B_j\le S$ for all $j$, then
\(
(B_\Pi^\ell)^2\sum_{j=1}^{\ell}B_j^{-2}\le \ell\,S^{2\ell-2}.
\)
Therefore
\[
\|C_\Theta^{\,\ell}-C_{\widetilde \Theta}^{\,\ell}\|
\;\le\;
\frac{4\,\|Z\|}{m_\ell}\,\sqrt{\ell}\,S^{\ell-1}\,
\sqrt{\mathrm{SFE}(\Theta,\widetilde \Theta)}.
\]
\end{proof}

\paragraph{Network, Loss, and Notation.}
Let $Z\in\mathbb{R}^{n\times d_0}$ be the input matrix and $W_k\in\mathbb{R}^{d_{k-1}\times d_k}$ $(k=1,\dots,L)$
be the weight matrices. Define
\[
A_k = H_{k-1}W_k\in\mathbb{R}^{n\times d_k},\qquad
H_k = \rho(A_k)\quad(\rho=\mathrm{ReLU}),\qquad
U := A_L \in \mathbb{R}^{n\times d_L},
\]
with $H_0:=Z$. The empirical risk is
\[
\mathcal{L}(W_{1:L}) = \frac{1}{n}\sum_{i=1}^n \ell_i(u_i),\qquad u_i\in\mathbb{R}^{d_L}.
\]
Let $\theta=\mathrm{vec}(W_1,\dots,W_L)\in\mathbb{R}^{p}$ and $H_\theta:=\nabla^2_\theta \mathcal{L}\in\mathbb{R}^{p\times p}$.
We use $\|\cdot\|_2$ for spectral norm and $\|\cdot\|_F$ for the Frobenius norm.

We denote the maximum eigen values as $\lambda_{\max}$

\paragraph{Deviation From Isometry (DfI).}
For a matrix $W$, $\mathrm{DfI}(W):=\|W^\top W-I\|_F^2$.
For each layer, set
\[
\nu_k := 1 + \sqrt{\mathrm{DfI}(W_k)},\qquad \alpha_k:=\sqrt{\nu_k}.
\]

First, let us examine the lemmas required for the proof of Theorem~\ref{thm:curvature}. 

\begin{lemma}[DfI controls the spectral norm]\label{lem:dfi2spec}
For each layer $k$, $\|W_k\|_2^2 \le \nu_k$ and $\|W_k\|_2 \le \alpha_k$.
\end{lemma}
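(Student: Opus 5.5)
I would reduce the claim to a statement about the eigenvalues of the symmetric positive semidefinite matrix $W_k^\top W_k$. The key identity is $\|W_k\|_2^2 = \lambda_{\max}(W_k^\top W_k)$, so it suffices to show
\[
\lambda_{\max}(W_k^\top W_k)\le 1+\sqrt{\mathrm{DfI}(W_k)} .
\]
The second inequality, $\|W_k\|_2\le\alpha_k$, then follows by taking square roots, since $\alpha_k=\sqrt{\nu_k}$ and both sides are nonnegative.

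To bound $\lambda_{\max}$, write $W_k^\top W_k = I + E$ with $E := W_k^\top W_k - I$, which is symmetric. Then
\[
\lambda_{\max}(W_k^\top W_k) = 1 + \lambda_{\max}(E) \le 1 + \|E\|_2 .
\]
Next I would use that the spectral norm is dominated by the Frobenius norm: $\|E\|_2 \le \|E\|_F = \sqrt{\mathrm{DfI}(W_k)}$. An equivalent route is to let $\mu_i$ be the eigenvalues of $W_k^\top W_k$, so that $\mathrm{DfI}(W_k)=\sum_i(\mu_i-1)^2$. This sum is at least $(\mu_{\max}-1)^2$, which gives $\mu_{\max}\le 1+\sqrt{\mathrm{DfI}(W_k)}=\nu_k$.

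There is no real obstacle here. The only point needing care is to state explicitly that $\|W\|_2^2$ equals the largest eigenvalue of $W^\top W$, and that for the symmetric matrix $E$ the spectral norm is the maximal absolute eigenvalue, bounded by its Frobenius norm. No assumption on the shape of $W_k$ is required, since $W_k^\top W_k$ is always square and PSD.
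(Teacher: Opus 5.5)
Your proposal is correct and follows the paper's proof essentially line for line. The paper also writes $\|W\|_2^2=\lambda_{\max}(W^\top W)\le 1+\lambda_{\max}(W^\top W-I)\le 1+\|W^\top W-I\|_2\le 1+\|W^\top W-I\|_F$ and takes square roots to get the $\alpha_k$ bound; your eigenvalue-sum variant is valid but is just this same Frobenius-dominates-spectral step written out.
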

\begin{proof}
$\|W\|_2^2=\lambda_{\max}(W^\top W)\le \lambda_{\max}(W^\top W-I)+1\le \|W^\top W-I\|_2+1
\le \|W^\top W-I\|_F+1 = 1+\sqrt{\mathrm{DfI}(W)}$.
\end{proof}

\begin{lemma}[Covariance/spectral growth through layers]\label{lem:cov-prop}
Let $\Sigma_{H_k}=\tfrac1n H_k^\top H_k$. Then
\[
\lambda_{\max}(\Sigma_{H_k}) \;\le\; \lambda_{\max}(\Sigma_{H_{k-1}})\, \|W_k\|_2^2
\;\le\; \lambda_{\max}(\Sigma_{H_{k-1}})\, \nu_k.
\]
Consequently, with $\Sigma_{H_0}=\Sigma_Z$,
\[
\lambda_{\max}(\Sigma_{H_k}) \;\le\; \lambda_{\max}(\Sigma_Z)\,\prod_{j=1}^k \nu_j.
\]
In particular, under $\Sigma_Z\approx I$ we have $\lambda_{\max}(\Sigma_{H_k}) \le \prod_{j=1}^k \nu_j$.
\end{lemma}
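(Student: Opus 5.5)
The proof reduces to a one-layer bound that is then iterated. Since $H_k=\rho(H_{k-1}W_k)$ with $\rho=\mathrm{ReLU}$, we can write $H_k = D_k\circ(H_{k-1}W_k)$ entrywise. Here $D_k$ is the $0/1$ mask of the fixed activation pattern. For any unit vector $v\in\mathbb{R}^{d_k}$ we have
\[
v^\top \Sigma_{H_k} v=\tfrac1n\|H_kv\|_2^2 .
\]
Entrywise masking is awkward inside a quadratic form, so we avoid it. Because $|\rho(t)|\le |t|$ entrywise, we get $\|H_k\|_2\le$ a quantity controlled by $\|H_{k-1}W_k\|$ at the level of whole matrices, rather than needing to track $H_kv$ directly. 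This is cleanest through the spectral norm: $\lambda_{\max}(\Sigma_{H_k})=\tfrac1n\|H_k\|_2^2$.

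\textbf{The obstacle.} The main issue is that entrywise domination $|H_k|\le|A_k|$ does not in general imply $\|H_k\|_2\le\|A_k\|_2$; it gives only a Frobenius-norm bound. I would handle this in two ways.

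The first route uses the fixed-pattern assumption of Theorem~\ref{thm:curvature}, with the gating treated row-wise. If the gating acts as a diagonal matrix $D_k$ with $\|D_k\|_2\le 1$, as in the theorem's assumption that "each diagonal gating matrix has operator norm $\le1$", then $H_k=D_kA_k$ in the piecewise-linear parametrization. It follows that $\|H_k\|_2\le\|A_k\|_2$.

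The fallback is to state the bound in terms of $\tfrac1n\|H_k\|_F^2$, which is honest because $\|\rho(A)\|_F\le\|A\|_F$. I would adopt the diagonal-gating convention used in the paper's assumptions and remark on it.

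\textbf{One-layer inequality.} With the gating convention in hand, the step is
\[
\|H_k\|_2\le\|H_{k-1}W_k\|_2\le\|H_{k-1}\|_2\|W_k\|_2 .
\]
Squaring and dividing by $n$ gives
\[
\lambda_{\max}(\Sigma_{H_k})\le\lambda_{\max}(\Sigma_{H_{k-1}})\|W_k\|_2^2 .
\]
Lemma~\ref{lem:dfi2spec} then bounds $\|W_k\|_2^2\le\nu_k$, which yields the first display of the lemma.

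\textbf{Iteration and conclusion.} Induction on $k$, starting from $\Sigma_{H_0}=\Sigma_Z$, gives
\[
\lambda_{\max}(\Sigma_{H_k})\le\lambda_{\max}(\Sigma_Z)\prod_{j\le k}\nu_j .
\]
Under whitening, $\lambda_{\max}(\Sigma_Z)\approx1$, which gives the final statement.
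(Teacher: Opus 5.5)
You found the real difficulty: entrywise domination $|\rho(A)|\le|A|$ controls $\|\cdot\|_F$ but not $\|\cdot\|_2$. This is exactly the step the paper's own proof takes for granted: it argues that ReLU is $1$-Lipschitz elementwise, ``hence'' $\|H_k\|_2\le\|A_k\|_2$. Your first route does not repair it, though. On a batch, a fixed ReLU pattern acts as a Hadamard mask, $H_k=M_k\circ A_k$, with an arbitrary $0/1$ matrix $M_k$. The diagonal gates of norm $\le 1$ in the assumption of Theorem~\ref{thm:curvature} are per sample, $h_i^\top=a_i^\top D^{(i)}$, with a different $D^{(i)}$ for each row $i$. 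They do not assemble into a single product $D_kA_k$. A left $n\times n$ diagonal switches whole samples on or off, and a shared right $d_k\times d_k$ diagonal switches whole neurons off for every sample; neither is what ReLU does. So the contraction $\|H_k\|_2\le\|A_k\|_2$ you derive holds for a different network, not for $H_k=\rho(A_k)$.

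No argument can close this gap, because the lemma is false as stated. Take $n=2$, $Z=\begin{pmatrix}1&1\\1&-1\end{pmatrix}$ and $W_1=I_2$. Then $\Sigma_Z=\tfrac12 Z^\top Z=I$ exactly and $\mathrm{DfI}(W_1)=0$, so $\nu_1=\|W_1\|_2=1$. But $H_1=\rho(Z)=\begin{pmatrix}1&1\\1&0\end{pmatrix}$ gives $H_1^\top H_1=\begin{pmatrix}2&1\\1&1\end{pmatrix}$, so $\lambda_{\max}(\Sigma_{H_1})=\tfrac{3+\sqrt5}{4}>1=\lambda_{\max}(\Sigma_Z)\|W_1\|_2^2$. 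Both the one-layer inequality and the whitened conclusion fail.

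What is true is your Frobenius fallback. The chain $\|H_k\|_F\le\|A_k\|_F\le\|H_{k-1}\|_F\|W_k\|_2$ iterates to $\lambda_{\max}(\Sigma_{H_k})\le\tfrac1n\|H_k\|_F^2\le\operatorname{tr}(\Sigma_Z)\prod_{j\le k}\nu_j$. Under whitening this gives $d_0\prod_{j\le k}\nu_j$ rather than $\prod_{j\le k}\nu_j$. That is a weaker, different statement, and its dimension factor would propagate into Lemmas~\ref{lem:blockJ} and~\ref{lem:R}. You should present it as a correction of the lemma, not as a proof of it.
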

\begin{proof}
ReLU is $1$-Lipschitz (applied elementwise), hence $\|H_k\|_2 \le \|A_k\|_2 \le \|H_{k-1}\|_2\|W_k\|_2$.
Therefore $\frac1n\|H_k\|_2^2 \le \frac1n \|H_{k-1}\|_2^2 \|W_k\|_2^2$, i.e.,
$\lambda_{\max}(\Sigma_{H_k})\le \lambda_{\max}(\Sigma_{H_{k-1}})\|W_k\|_2^2$.
Apply Lemma~\ref{lem:dfi2spec}.
\end{proof}

\begin{lemma}[Block-Jacobian bound]\label{lem:blockJ}
Let $J\in\mathbb{R}^{(nd_L)\times p}$ be the Jacobian of $\mathrm{vec}(U)$ w.r.t.\ $\theta$, and
$J=[J_1\,\,J_2\,\,\cdots\,\,J_L]$ the block-columns corresponding to $\mathrm{vec}(W_k)$.
Then
\[
\frac{1}{n}\|J_k\|_2^2 \;\le\; \Big(\prod_{j=1}^{k-1}\nu_j\Big)\, \Big(\prod_{j=k+1}^{L}\nu_j\Big)=\prod_{j\neq k}\nu_j.
\]
Consequently,
\[
\frac{1}{n}\|J\|_2^2 \;\le\; \sum_{k=1}^L \frac{1}{n}\|J_k\|_2^2 \;\le\; \sum_{k=1}^L \prod_{j\neq k}\nu_j.
\]
\end{lemma}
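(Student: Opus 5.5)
We prove the bound for each block $J_k$ by writing it as a product of matrices, and then combine the blocks. \emph{Block structure.} Fix the ReLU activation pattern, as assumed in Theorem~\ref{thm:curvature}. Let $D_j$ be the diagonal gating matrix of layer $j$, so that $H_j = D_j\odot A_j$ elementwise. The output $U=A_L$ depends on $W_k$ through
\[
A_k = H_{k-1}W_k ,
\]
followed by the gated linear maps of the later layers. Perturbing $W_k$ by $\Delta$ gives the directional derivative
\[
dU = \mathcal{B}_{k}\big(H_{k-1}\Delta\big),
\]
where $\mathcal{B}_k$ is the linear backend map. It multiplies by the gating mask of layer $k$, right-multiplies by $W_{k+1}$, masks by the gating of layer $k+1$, and so on up to $W_L$; there is no gating after $A_L$. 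Hence $J_k\,\mathrm{vec}(\Delta)=\mathrm{vec}\big(\mathcal{B}_k(H_{k-1}\Delta)\big)$. In vectorized form this is a product of the backend operator and the operator $\mathrm{vec}(\Delta)\mapsto \mathrm{vec}(H_{k-1}\Delta)=(I\otimes H_{k-1})\mathrm{vec}(\Delta)$.

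\emph{Operator norms.} The map $\Delta\mapsto H_{k-1}\Delta$ has operator norm $\|H_{k-1}\|_2$ as a Frobenius-to-Frobenius map, since $\|I\otimes H\|_2=\|H\|_2$. Each elementwise mask has operator norm $\le 1$ in Frobenius norm. Each right multiplication $Y\mapsto YW_j$ has Frobenius operator norm $\|W_j\|_2$. Therefore
\[
\|J_k\|_2 \le \|H_{k-1}\|_2\prod_{j=k+1}^L\|W_j\|_2 .
\]
Squaring and dividing by $n$ gives
\[
\tfrac1n\|J_k\|_2^2\le \lambda_{\max}(\Sigma_{H_{k-1}})\prod_{j>k}\|W_j\|_2^2 .
\]
Lemma~\ref{lem:cov-prop}, together with $\Sigma_Z\approx I$ so that $\lambda_{\max}(\Sigma_Z)\le 1$, bounds the first factor by $\prod_{j<k}\nu_j$. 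Lemma~\ref{lem:dfi2spec} bounds each $\|W_j\|_2^2$ by $\nu_j$. Together these give the product over $j\neq k$.

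\emph{Combining blocks.} For $J=[J_1\cdots J_L]$ and a unit vector $v=(v_1,\dots,v_L)$, the triangle inequality and Cauchy--Schwarz give
\[
\|Jv\|\le\sum_k\|J_k\|_2\|v_k\| \le \Big(\sum_k\|J_k\|_2^2\Big)^{1/2}.
\]
Hence $\|J\|_2^2\le\sum_k\|J_k\|_2^2$. Equivalently, $JJ^\top=\sum_k J_kJ_k^\top$ and the spectral norm is subadditive over this sum of PSD terms. Dividing by $n$ finishes the proof.

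The main obstacle is the first step: setting up the backend map correctly, in particular placing the gating mask of layer $k$ after $H_{k-1}\Delta$, and checking that Frobenius-norm operator bounds are the right ones for the spectral norm of the vectorized Jacobian. The approximation $\Sigma_Z\approx I$ must also be read as $\lambda_{\max}(\Sigma_Z)\le 1$; otherwise a factor $\lambda_{\max}(\Sigma_Z)$ appears in the bound.
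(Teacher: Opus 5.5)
Your argument follows the paper's proof step for step: the backend estimate $\|J_k\|_2\le\|H_{k-1}\|_2\prod_{j>k}\|W_j\|_2$, then Lemma~\ref{lem:cov-prop} and Lemma~\ref{lem:dfi2spec}, then $\|J\|_2^2\le\sum_k\|J_k\|_2^2$. Your backend step and your block-combination step are correct. Modelling the gates as per-sample elementwise masks is also more faithful than the paper's single product $D_kW_{k+1}\cdots D_{L-1}W_L$, which tacitly uses one activation pattern for the whole batch.

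\textbf{The gap.} The genuine gap is the forward factor, and you inherit it from the paper. You bound $\lambda_{\max}(\Sigma_{H_{k-1}})=\tfrac1n\|H_{k-1}\|_2^2$ by citing Lemma~\ref{lem:cov-prop}, whose proof rests on $\|\rho(A)\|_2\le\|A\|_2$ ``because ReLU is $1$-Lipschitz''. That inequality is false. An elementwise $1$-Lipschitz map, or a $0/1$ mask, contracts the Frobenius norm, which is exactly what you used for the backend. It does not contract the spectral norm, and $\|H_{k-1}\|_2$ is the spectral norm of a masked matrix.

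Concretely, take $L=2$, $n=d_0=d_1=2$, $Z=\begin{pmatrix}1&-1\\1&1\end{pmatrix}$ and $W_1=I$. Then:
\begin{itemize}
\item $\Sigma_Z=I$ exactly, and $\mathrm{DfI}(W_1)=0$, so $\nu_1=1$;
\item no pre-activation vanishes, so the activation pattern is locally fixed;
\item $H_1=\rho(Z)=\begin{pmatrix}1&0\\1&1\end{pmatrix}$ has $\|H_1\|_2^2=(3+\sqrt5)/2$.
\end{itemize}
Since $J_2=I_{d_2}\otimes H_1$, you get $\tfrac1n\|J_2\|_2^2=(3+\sqrt5)/4>1=\nu_1$, so the per-block claim fails for $k=2$.

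Now take $Z$ to be a Sylvester--Hadamard matrix of order $n$, with $W_1=W_2=I$. Then $\mathbf{1}^\top\rho(Z)\mathbf{1}=(n^2+n)/2$ gives $\tfrac1n\|J_2\|_2^2\ge (n+1)^2/(4n)$. This exceeds $\sum_k\prod_{j\neq k}\nu_j=2$ once $n\ge 8$, so even the summed bound fails. The statement as written therefore cannot be proved by any argument.

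\textbf{What your argument does prove.} It establishes $\tfrac1n\|J_k\|_2^2\le\lambda_{\max}(\Sigma_{H_{k-1}})\prod_{j>k}\nu_j$. This yields the stated bound for $k=1$, where $H_0=Z$ is unmasked.

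For $k\ge2$, a bound in terms of DfIs alone requires propagating Frobenius norms instead. From $\|H_j\|_F\le\|A_j\|_F\le\|H_{j-1}\|_F\|W_j\|_2$ you get $\tfrac1n\|J_k\|_2^2\le\tfrac1n\|H_{k-1}\|_F^2\prod_{j>k}\nu_j\le\mathrm{tr}(\Sigma_Z)\prod_{j\neq k}\nu_j$. Under whitening this is an extra factor $d_0$, and the Hadamard example shows a loss of this order is unavoidable.

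Alternatively, the stated bound does hold if the gates are shared across samples, $H_k=A_kD_k$ with one diagonal $D_k$, since then $\|A_kD_k\|_2\le\|A_k\|_2$. This is the model the paper's notation implicitly uses. It is not, however, the elementwise-ReLU network on which Theorem~\ref{thm:curvature} is stated.
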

\begin{proof}
Consider a perturbation $\Delta W_k$. Without loss of generality, we may assume $\|\Delta W_k\|_F = 1$, since the operator norm is defined by the supremum over unit perturbations. With fixed ReLU gates (op.\ norm $\le 1$), the output perturbation over all $n$ samples satisfies
\[
\Delta U \;=\; H_{k-1}\,\Delta W_k\,B_{k+1:L},
\quad B_{k+1:L}:=\underbrace{D_k W_{k+1} D_{k+1}\cdots D_{L-1}}_{\text{diag gates, }\| \cdot \|_2\le 1}\, W_L.
\]
Thus
$\|\Delta U\|_F \le \|H_{k-1}\|_2\,\|\Delta W_k\|_F\,\|B_{k+1:L}\|_2$,
so the operator norm of the linear map $\Delta W_k\mapsto \Delta U$ is at most
$\|H_{k-1}\|_2\|B_{k+1:L}\|_2$. Hence $\|J_k\|_2\le \|H_{k-1}\|_2\|B_{k+1:L}\|_2$ and
\[
\frac1n\|J_k\|_2^2 \;\le\; \frac1n\|H_{k-1}\|_2^2\,\|B_{k+1:L}\|_2^2
\;=\; \lambda_{\max}(\Sigma_{H_{k-1}})\,\|B_{k+1:L}\|_2^2.
\]
Using Lemma~\ref{lem:cov-prop}, $\lambda_{\max}(\Sigma_{H_{k-1}})\le \prod_{j=1}^{k-1}\nu_j$. Also
$\|B_{k+1:L}\|_2 \le \prod_{j=k+1}^{L}\|W_j\|_2 \le \prod_{j=k+1}^{L}\alpha_j$, thus
$\|B_{k+1:L}\|_2^2 \le \prod_{j=k+1}^{L}\nu_j$. Multiplying the two bounds yields the claim.
Finally, since $J$ is a horizontal concatenation of blocks, $\|J\|_2^2\le \sum_k \|J_k\|_2^2$.
\end{proof}

\begin{lemma}[Gauss--Newton part]\label{lem:GN}
For each sample, $\nabla^2_\theta \ell_i = J_i^\top (\nabla^2_u \ell_i) J_i + R_i$ with some remainder $R_i$.
By (A2), $\|\nabla^2_u \ell_i\|_2\le \beta$, hence
\[
\left\|\frac{1}{n}\sum_{i=1}^n J_i^\top (\nabla^2_u \ell_i) J_i\right\|_2
\;\le\; \frac{\beta}{n}\,\|J\|_2^2
\;\le\; \beta\sum_{k=1}^L \prod_{j\neq k}\nu_j,
\]
where the last inequality uses Lemma~\ref{lem:blockJ}.
\end{lemma}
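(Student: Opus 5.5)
The plan is to prove Lemma~\ref{lem:GN} in two stages. First I derive the chain-rule decomposition of the per-sample Hessian. Then I bound the Gauss--Newton part by factoring it through the full Jacobian $J$.

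For the decomposition, view $\ell_i(\theta) = \ell_i(u_i(\theta))$, where $u_i(\theta)\in\mathbb{R}^{d_L}$ is the $i$-th row of $U$. Let $J_i\in\mathbb{R}^{d_L\times p}$ be the Jacobian of $u_i$ with respect to $\theta$, so that $J_i$ is the block of $d_L$ rows of $J$ belonging to sample $i$. Differentiating twice gives
\[
\nabla^2_\theta \ell_i \;=\; J_i^\top(\nabla^2_u\ell_i)J_i \;+\; \sum_{c=1}^{d_L}\big(\nabla_u\ell_i\big)_c\,\nabla^2_\theta (u_i)_c .
\]
I take the second sum as $R_i$. Since the lemma only asserts existence of some remainder, this step is definitional. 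Under the fixed-ReLU-pattern assumption, $u_i$ is multilinear in $(W_1,\dots,W_L)$, so $R_i$ is well-defined (its diagonal blocks vanish). I will not need to bound $R_i$ here.

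For the bound, stack the per-sample Hessians into the block-diagonal matrix $D=\mathrm{diag}(\nabla^2_u\ell_1,\dots,\nabla^2_u\ell_n)\in\mathbb{R}^{nd_L\times nd_L}$. The row blocks $J_i$ stacked vertically form exactly $J$, so
\[
\sum_{i=1}^n J_i^\top(\nabla^2_u\ell_i)J_i \;=\; J^\top D\,J .
\]
The spectral norm of a block-diagonal matrix is the maximum over its blocks, so assumption (A2) gives $\|D\|_2\le\beta$. Submultiplicativity then yields
\[
\Big\|\tfrac1n J^\top D J\Big\|_2 \;\le\; \tfrac1n\|J\|_2\,\|D\|_2\,\|J\|_2 \;\le\; \tfrac{\beta}{n}\|J\|_2^2 .
\]
This is the first inequality of the lemma.

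The second inequality follows directly from Lemma~\ref{lem:blockJ}, which gives $\tfrac1n\|J\|_2^2\le\sum_k\prod_{j\ne k}\nu_j$; multiplying by $\beta$ finishes the proof. For completeness, the step $\|J\|_2^2\le\sum_k\|J_k\|_2^2$ used inside Lemma~\ref{lem:blockJ} holds because $\|Jx\|\le\sum_k\|J_k\|_2\|x_k\|$, and Cauchy--Schwarz then bounds this by $\big(\sum_k\|J_k\|_2^2\big)^{1/2}\|x\|$.

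The only point needing care is the bookkeeping that the $J_i$ are precisely the row blocks of $J$ under the vectorization used in Lemma~\ref{lem:blockJ}. This fixes the ordering of $\mathrm{vec}(U)$, which can be row-wise by sample without changing any norms. Once that is set, the rest is a routine norm computation. The factor $1/n$ is absorbed exactly as in Lemma~\ref{lem:blockJ}, whose bound is already stated for $\tfrac1n\|J\|_2^2$.
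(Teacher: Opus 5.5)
Your proof is correct and follows the paper's argument, which is stated inline in the lemma itself: bound each $\nabla^2_u\ell_i$ by $\beta$, collapse the averaged Gauss--Newton sum to $\tfrac{\beta}{n}\|J\|_2^2$, and invoke Lemma~\ref{lem:blockJ}. You additionally make explicit the block-diagonal factorization $\sum_i J_i^\top(\nabla^2_u\ell_i)J_i = J^\top D J$, with the rows of $J$ reordered by sample, and the resulting bound $\|D\|_2\le\beta$, both of which the paper leaves implicit.
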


\begin{lemma}[Remainder term]\label{lem:R}
Let $R:=\frac{1}{n}\sum_{i=1}^n R_i$. Under (A2) and (A3),
\[
\|R\|_2 \;\le\; 2\gamma \sum_{1\le k<\ell\le L} \prod_{j\notin\{k,\ell\}}\nu_j.
\]
\end{lemma}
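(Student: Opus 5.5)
With the ReLU gates frozen, the network output $U=A_L$ is multilinear in $(W_1,\dots,W_L)$. The remainder $R_i$ in the Gauss--Newton split is the term $\sum_{c}(\nabla_u\ell_i)_c\,\nabla^2_\theta u_{i,c}$. So I will bound the second derivative of $\mathrm{vec}(U)$ with respect to $\theta$, contracted against the loss gradient $g=(\nabla_{u_i}\ell_i)_i$.

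Because $U$ is linear in each $W_k$ separately, the diagonal blocks $\partial^2 U/\partial W_k^2$ vanish. Only the off-diagonal blocks with $k\neq \ell$ survive.

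For $k<\ell$, take perturbations $\Delta W_k$ and $\Delta W_\ell$ with unit Frobenius norm. The mixed second variation is
\[
\Delta^2 U = H_{k-1}\,\Delta W_k\,B_{k+1:\ell-1}\,\Delta W_\ell\,B_{\ell+1:L},
\]
where $B_{k+1:\ell-1}$ denotes the gated product $D_kW_{k+1}D_{k+1}\cdots W_{\ell-1}D_{\ell-1}$ and $B_{\ell+1:L}$ is the trailing product. Both gate sequences have operator norm at most $1$.

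Using $\|XY\|_F\le\|X\|_2\|Y\|_F$ and $\|\Delta W\|_2\le\|\Delta W\|_F$, I get
\[
\|\Delta^2 U\|_F\le \|H_{k-1}\|_2\prod_{j=k+1}^{\ell-1}\|W_j\|_2\prod_{j=\ell+1}^{L}\|W_j\|_2 .
\]
I then apply Lemma~\ref{lem:cov-prop} to get $\|H_{k-1}\|_2^2\le n\prod_{j<k}\nu_j$, and Lemma~\ref{lem:dfi2spec} to get $\|W_j\|_2\le\alpha_j$.

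Next I form the bilinear quantity $\frac1n\langle g,\Delta^2U\rangle$, bounded via Cauchy--Schwarz by $\frac1n\|g\|_F\|\Delta^2U\|_F$. The $n$-scaling is the point I expect to be the main obstacle. Assumption (A2) gives $\|g\|_F\le\sqrt n\,\gamma$, and $\|H_{k-1}\|_2\le\sqrt{n}\prod_{j<k}\alpha_j$, so the two $\sqrt n$ factors cancel the $1/n$. That yields
\[
\gamma\prod_{j<k}\alpha_j\prod_{k<j<\ell}\alpha_j\prod_{j>\ell}\alpha_j.
\]
Since $\alpha_j\le\nu_j$ (because $\nu_j\ge1$), this is at most $\gamma\prod_{j\notin\{k,\ell\}}\nu_j$. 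If a sharper route is needed, I would instead keep the per-sample rows $h_{i,k-1}$ and use whitening.

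Finally, $R$ is a symmetric block matrix with zero diagonal blocks and off-diagonal blocks $R_{k\ell}=R_{\ell k}^\top$. Each block has operator norm at most $c_{k\ell}:=\gamma\prod_{j\notin\{k,\ell\}}\nu_j$. For a unit vector $v=(v_1,\dots,v_L)$,
\[
|v^\top Rv|\le 2\sum_{k<\ell}c_{k\ell}\|v_k\|\|v_\ell\|\le 2\sum_{k<\ell}c_{k\ell},
\]
since each $\|v_k\|\le1$. This gives $\|R\|_2\le 2\gamma\sum_{k<\ell}\prod_{j\notin\{k,\ell\}}\nu_j$, which is the claimed bound.
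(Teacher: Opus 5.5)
Your proposal is correct and follows essentially the same route as the paper's proof. You freeze the ReLU gates so $U$ is multilinear, keep only the mixed blocks $H_{k-1}\,\Delta W_k\,C_{k+1:\ell-1}\,\Delta W_\ell\,B_{\ell+1:L}$, bound them by submultiplicativity with Lemmas~\ref{lem:cov-prop} and~\ref{lem:dfi2spec} and $\alpha_j\le\nu_j$, then contract with the loss gradient and sum over $k<\ell$. The only differences are minor: your Cauchy--Schwarz against the stacked gradient with $\|g\|_F\le\sqrt{n}\,\gamma$ makes explicit the $\sqrt{n}$ cancellation that the paper handles loosely (``divide by $n$, multiply by $\gamma$ and average over $i$''), while the paper's closing step via $2ab\le a^2+b^2$ with $\sum_k\|\Delta W_k\|_F^2=1$ drops the factor $2$ and is marginally sharper than your $\|v_k\|\le 1$, though both give the stated bound.
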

\begin{proof}
Fix the ReLU gates locally (piecewise linear region) and  $\sum_k\|\Delta W_k\|_F = 1$, since the operator norm is defined by the supremum over unit perturbations. Then the network output $U$ is
\emph{multilinear} in $\{W_k\}_{k=1}^L$. For $k<\ell$, the mixed second derivative block
maps $(\Delta W_k,\Delta W_\ell)$ to
\[
H_{k-1}\,\Delta W_k\,C_{k+1:\ell-1}\,\Delta W_\ell\,B_{\ell+1:L},
\]
where $C_{k+1:\ell-1}$ is the product of intermediate gated weights, and $B_{\ell+1:L}$ the tail product as in Lemma~\ref{lem:blockJ}.
By submultiplicativity,
\[
\|\!H_{k-1}\Delta W_k C_{k+1:\ell-1}\Delta W_\ell B_{\ell+1:L}\!\|_F
\le \|H_{k-1}\|_2\,\|\Delta W_k\|_F\,\|C_{k+1:\ell-1}\|_2\,\|\Delta W_\ell\|_F\,\|B_{\ell+1:L}\|_2.
\]
Using Lemma~\ref{lem:cov-prop} and Lemma~\ref{lem:dfi2spec},
\[
\|H_{k-1}\|_2 \le \sqrt{n}\,\prod_{j=1}^{k-1}\alpha_j,\quad
\|C_{k+1:\ell-1}\|_2 \le \prod_{j=k+1}^{\ell-1}\alpha_j,\quad
\|B_{\ell+1:L}\|_2 \le \prod_{j=\ell+1}^{L}\alpha_j.
\]
Therefore, after dividing by $n$ (from the prefactor $1/n$ in $\mathcal{L}$) and summing the symmetric contribution $(\ell,k)$, the bilinear remainder contributes at most
\[
\frac{2}{n}\sum_{k<\ell} \|H_{k-1}\|_2 \|C_{k+1:\ell-1}\|_2 \|B_{\ell+1:L}\|_2 \, \|\Delta W_k\|_F \|\Delta W_\ell\|_F
\;\le\; 2 \sum_{k<\ell} \Big(\prod_{j\notin\{k,\ell\}}\alpha_j^2\Big)\, \|\Delta W_k\|_F \|\Delta W_\ell\|_F.
\]
Finally, by $2ab\le a^2+b^2$ and $\sum_k \|\Delta W_k\|_F^2=1$ (unit parameter direction), the operator norm of the second-derivative map is bounded by
$\sum_{k<\ell}\prod_{j\notin\{k,\ell\}}\nu_j$. Multiplying by $\|\nabla_u \ell_i\|_2\le \gamma$ and averaging over $i$ gives the claim.
\end{proof}

\paragraph{Proof of Theorem \ref{thm:curvature}}
\begin{proof}
Combine Lemma~\ref{lem:GN} and Lemma~\ref{lem:R} and use $\|A+B\|_2\le \|A\|_2+\|B\|_2$.
\end{proof}

\begin{corollary}[Near-interpolation or small-gradient regime]
If the training gradients at the outputs are small so that $\gamma\approx 0$, then
\[
\big\|\nabla^2_\theta \mathcal{L}(W_{1:L})\big\|_2 \;\lesssim\; \beta \sum_{k=1}^L \prod_{j\neq k}\bigl(1+\sqrt{\mathrm{DfI}(W_j)}\bigr).
\]
\end{corollary}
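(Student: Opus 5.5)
The corollary follows immediately from Theorem~\ref{thm:curvature}. There the Hessian spectral norm is bounded by the sum of a Gauss--Newton term and a remainder term. The Gauss--Newton term, from Lemma~\ref{lem:GN}, is
\[
\beta\sum_{k=1}^L\prod_{j\neq k}\nu_j .
\]
The remainder term, from Lemma~\ref{lem:R}, is
\[
2\gamma\sum_{k<\ell}\prod_{j\notin\{k,\ell\}}\nu_j .
\]
The plan has three steps.

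\textbf{Step 1.} Write the bound from Theorem~\ref{thm:curvature} explicitly with $\nu_j=1+\sqrt{\mathrm{DfI}(W_j)}$ substituted.

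\textbf{Step 2.} Observe that the remainder coefficient $\sum_{k<\ell}\prod_{j\notin\{k,\ell\}}\nu_j$ depends only on the weights and is finite at the point of interest. It does not depend on $\gamma$. Hence the remainder term is $O(\gamma)$ with a constant fixed by the DfIs.

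\textbf{Step 3.} Let $\gamma\to 0$, the near-interpolation regime in which the output gradients $\nabla_u\ell_i$ vanish. The remainder term then vanishes, leaving only the Gauss--Newton term. This is exactly the stated $\lesssim$ bound.

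It is worth making the mechanism explicit. In the decomposition $\nabla^2_\theta\ell_i=J_i^\top\nabla_u^2\ell_i J_i+R_i$, each $R_i$ is a contraction of $\nabla_u\ell_i$ with the second derivative of the network output. Therefore $\|R_i\|_2\le\|\nabla_u\ell_i\|_2\cdot\|\partial^2_\theta u\|$, and $R_i$ vanishes identically when $\nabla_u\ell_i=0$. This means that in exact interpolation the inequality holds with $\le$ rather than $\lesssim$.

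The only subtle point, and the main obstacle, is interpreting ``$\gamma\approx0$'' rigorously. I would state the conclusion as
\[
\|\nabla^2_\theta\mathcal L\|_2\le\beta\sum_k\prod_{j\ne k}\nu_j+O(\gamma),
\]
where the implicit constant is the remainder coefficient above. This also makes clear that the approximation is uniform only when the DfIs stay bounded.
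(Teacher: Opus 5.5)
Your reduction is the paper's own. The paper gives no separate proof: the corollary is read off from Theorem~\ref{thm:curvature} by discarding the $2\gamma\sum_{k<\ell}\prod_{j\notin\{k,\ell\}}\nu_j$ term, and your $O(\gamma)$ formulation, using that $R_i$ is linear in $\nabla_u\ell_i$, is the right way to make ``$\gamma\approx0$'' precise. The problem is what is left at $\gamma=0$, namely the Gauss--Newton bound of Lemmas~\ref{lem:blockJ} and~\ref{lem:GN}. That bound is false for ReLU networks. So your sharpened claim that exact interpolation gives ``$\le$'' fails, and no absolute constant hidden in ``$\lesssim$'' rescues it. The broken step is in Lemma~\ref{lem:cov-prop}, which asserts $\|H_k\|_2\le\|A_k\|_2$ because ReLU is $1$-Lipschitz. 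An entrywise $1$-Lipschitz map contracts the Frobenius norm, not the spectral norm: the gates differ across samples, so $H_k$ is not $A_k$ times a single contraction.

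Here is a concrete counterexample.
\begin{itemize}
\item Take $L=2$, $n=d_0=d_1=d_2=2^m$, and $Z$ the Sylvester--Hadamard matrix (entries $\pm1$, first row and column all ones). Then $\Sigma_Z=\tfrac1n Z^\top Z=I$ exactly.
\item Take $W_1=W_2=I$. Then $\mathrm{DfI}(W_j)=0$ and the right-hand side is $2\beta$.
\item Take $\ell_i(u)=\tfrac12\|u-y_i\|^2$ with $y_i$ equal to the current outputs. Then $\beta=1$ and $\nabla_u\ell_i=0$.
\item All pre-activations are $\pm1$, so the gates are locally fixed.
\end{itemize}
Now $H_1=\rho(Z)=\tfrac12(Z+\mathbf{1}\mathbf{1}^\top)$ and $Z\mathbf{1}=ne_1$, so $H_1\mathbf{1}=\tfrac n2(e_1+\mathbf{1})$. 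Hence $\lambda_{\max}(\Sigma_{H_1})\ge\|H_1\mathbf{1}\|^2/(n\|\mathbf{1}\|^2)=(n+3)/4$, whereas Lemma~\ref{lem:cov-prop} asserts $\lambda_{\max}(\Sigma_{H_1})\le\nu_1=1$. The second derivative of $\mathcal L$ along $\Delta W_2$ is $\tfrac1n\|H_1\Delta W_2\|_F^2$. Restricting the Rayleigh quotient to $W_2$-directions therefore gives $\|\nabla^2_\theta\mathcal L\|_2\ge(n+3)/4$. This already exceeds $2$ at $n=8$ and grows linearly with width.

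This is not a corner case. For isotropic Gaussian inputs, $\mathbb{E}[\rho(x)\rho(x)^\top]=(\tfrac12-\tfrac1{2\pi})I+\tfrac1{2\pi}\mathbf{1}\mathbf{1}^\top$, which has a spike of size about $d/(2\pi)$ coming from the nonzero ReLU mean.

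A true version needs one of two changes:
\begin{itemize}
\item Remove the nonlinearity between layers (deep linear nets), where the paper's argument is valid.
\item Pass through Frobenius norms: $\lambda_{\max}(\Sigma_{H_{k-1}})\le\tfrac1n\|H_{k-1}\|_F^2\le\operatorname{tr}(\Sigma_Z)\prod_{j<k}\nu_j$. This costs a factor $d_0$ in each term with $k\ge2$.
\end{itemize}
With either change, your $O(\gamma)$ reduction goes through unchanged.
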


Next, we present the lemma required for the proof of Theorem~\ref{thm:dfi-general}.

\begin{lemma}[A basic spectral lemma from DfI] \label{eq:lemma1}
Let $\varepsilon = \sqrt{\mathrm{DfI}(W)}$. Then
\[
\|W^\top W - I\|_2 \;\le\; \|W^\top W - I\|_F \;=\; \varepsilon,
\]
hence every eigenvalue $\mu$ of $W^\top W = S^2$ satisfies $1-\varepsilon \le \mu \le 1+\varepsilon$.
Equivalently,
\[
\sqrt{1-\varepsilon}\, I \;\preceq\; S \;\preceq\; \sqrt{1+\varepsilon}\, I.
\]
\end{lemma}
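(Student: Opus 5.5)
The argument is a direct spectral computation, in the same spirit as Lemma~\ref{lem:dfi2spec}. Since $W^\top W - I$ is symmetric, its spectral norm is its largest absolute eigenvalue. The Frobenius norm is the $\ell_2$ norm of the whole eigenvalue vector, so it dominates that single largest entry. Hence
\[
\|W^\top W - I\|_2 \le \|W^\top W - I\|_F = \sqrt{\mathrm{DfI}(W)} = \varepsilon.
\]

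Next I pass to the eigenvalues of $W^\top W$ itself. The matrices $W^\top W$ and $W^\top W - I$ share eigenvectors, and their eigenvalues are related by $\mu \mapsto \mu - 1$. Therefore every eigenvalue $\mu$ of $W^\top W$ satisfies $|\mu - 1| \le \varepsilon$, i.e. $1-\varepsilon \le \mu \le 1+\varepsilon$.

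It remains to transfer this to $S$. In the right polar decomposition $W = QS$, $Q$ has orthonormal columns and $S = (W^\top W)^{1/2}$ is symmetric positive semidefinite. Consequently $W^\top W = S^\top Q^\top Q S = S^2$, so the eigenvalues of $S$ are exactly $\sqrt{\mu}$ over the eigenvalues $\mu$ of $W^\top W$, with the same eigenvectors. The square root is monotone on $[0,\infty)$, so every eigenvalue of $S$ lies in $[\sqrt{1-\varepsilon}, \sqrt{1+\varepsilon}]$. We assume $\varepsilon \le 1$ here, as in Theorem~\ref{thm:dfi-general}, so that $1-\varepsilon \ge 0$.

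Finally, I diagonalize $S = V\Lambda V^\top$ with $V$ orthogonal. For symmetric matrices with a common eigenbasis, the Loewner order reduces to entrywise comparison of the eigenvalues. This gives
\[
\sqrt{1-\varepsilon}\, I \preceq S \preceq \sqrt{1+\varepsilon}\, I.
\]

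There is no real obstacle in this argument. The only points needing care are two small ones. First, we must note that $S^2 = W^\top W$ actually holds, which uses $Q^\top Q = I$. Second, we must keep $1-\varepsilon \ge 0$ so that the square roots are defined.
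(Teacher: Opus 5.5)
Your proposal is correct and follows essentially the same route as the paper's proof: you bound $\|W^\top W - I\|_2$ by $\|W^\top W - I\|_F = \varepsilon$, shift to place every eigenvalue of $W^\top W = S^2$ in $[1-\varepsilon, 1+\varepsilon]$, and take square roots to get the Loewner bounds on $S$. You also spell out details the paper leaves implicit, notably that $\varepsilon \le 1$ is needed for $\sqrt{1-\varepsilon}$ to be defined.
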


\begin{proof}
By the definition of $\varepsilon$, we have
\[
\|W^\top W - I\|_2 \;\le\; \|W^\top W - I\|_F \;=\; \varepsilon.
\]
Therefore, all eigenvalues $\mu$ of $W^\top W$ lie within the interval
\[
1-\varepsilon \;\le\; \mu \;\le\; 1+\varepsilon.
\]
Since $W^\top W = S^2$ with $S \succeq 0$, this is equivalent to the spectral bound
\[
\sqrt{1-\varepsilon}\, I \;\preceq\; S \;\preceq\; \sqrt{1+\varepsilon}\, I.
\]
\end{proof}

Using this Lemma, we provide proof of Theorem~\ref{thm:dfi-general} below:
\paragraph{Proof of Theorem \ref{thm:dfi-general}}

\begin{proof}
Let $Z \in \mathbb{R}^{n \times a}$ be the input matrix and $W \in \mathbb{R}^{a \times b}$ a weight matrix.
The resulting feature matrix is $\Phi = ZW \in \mathbb{R}^{n \times b}$ and the empirical covariances are
\[
\Sigma_Z = \tfrac{1}{n}Z^\top Z \in \mathbb{R}^{a\times a},
\qquad
\Sigma_\Phi = \tfrac{1}{n}\Phi^\top \Phi = W^\top \Sigma_Z W \in \mathbb{R}^{b\times b}.
\]

Let $W = Q S$ denote the right polar decomposition of $W$, where
$Q \in \mathbb{R}^{a \times b}$ has orthonormal columns ($Q^\top Q = I_b$) and
$S = (W^\top W)^{1/2} \in \mathbb{R}^{b \times b}$ is positive definite.
Then
\[
\Sigma_\Phi \;=\; W^\top \Sigma_Z W \;=\; S\, (Q^\top \Sigma_Z Q)\, S.
\]
Write $M := Q^\top \Sigma_Z Q \succeq 0$, and let its positive eigenvalues be
$\eta_1 \ge \cdots \ge \eta_d > 0$, where $d = \operatorname{rank}(M) \le \min\{b,\operatorname{rank}(\Sigma_Z)\}$.
Let $\sigma_1(\Phi) \ge \cdots \ge \sigma_d(\Phi)>0$ denote the nonzero singular values of $\Phi$.

For any $x\in\mathbb{R}^b$ with $\|x\|=1$,
\[
x^\top \Sigma_\Phi x \;=\; x^\top S M S x \;=\; (Sx)^\top M (Sx).
\]
Let $y=Sx$. Then $x^\top \Sigma_\Phi x = y^\top M y$ and, by Lemma~\ref{eq:lemma1},
\[
\|y\|^2_2 = \|Sx\|^2_2 \in [\,1-\varepsilon,\,1+\varepsilon\,].
\]
Therefore
\[
(1-\varepsilon)\,\lambda_{\min}^+(M) \;\le\; x^\top \Sigma_\Phi x \;\le\; (1+\varepsilon)\,\lambda_{\max}(M),
\]
where $\lambda_{\min}^+(M)=\eta_d$ denotes the smallest positive eigenvalue of $M$ (the lower bound is interpreted on the subspace where $My\neq 0$).
Taking the maximum over unit $x$ yields
\[
\lambda_{\max}(\Sigma_\Phi) \;\le\; (1+\varepsilon)\,\eta_1,
\]
and taking the minimum Rayleigh quotient over the orthogonal complement of $\ker(\Sigma_\Phi)$ yields
\[
\lambda_{\min}^+(\Sigma_\Phi) \;\ge\; (1-\varepsilon)\,\eta_d.
\]
Since $\sigma_{\max}(\Phi)^2 = n\,\lambda_{\max}(\Sigma_\Phi)$ and
$\big(\sigma_{\min}^+(\Phi)\big)^2 = n\,\lambda_{\min}^+(\Sigma_\Phi)$, below inequality holds with $d=\operatorname{rank}(M)$.
\[
\sqrt{n}\,\sqrt{1-\varepsilon}\,\sqrt{\eta_d}
\;\;\le\;\; \sigma_{\min}^+(\Phi)
\;\;\le\;\; \sigma_{\max}(\Phi)
\;\;\le\;\; \sqrt{n}\,\sqrt{1+\varepsilon}\,\sqrt{\eta_1}.
\]

Here $\sigma_{\min}^+(\Phi)$ denotes the smallest positive singular value of $\Phi$ (defined only when $d\ge 1$).

Therefore, if $d\ge 1$, then

\begin{equation}\label{eq:th1_condition_number_bound}
\rho_\Phi
:= \frac{\sigma_{\max}(\Phi)}{\sigma_{\min}^+(\Phi)}
\;\;\le\;\;
\sqrt{\frac{1+\varepsilon}{\,1-\varepsilon\,}}
\;\cdot\;
\sqrt{\frac{\eta_1}{\eta_d}}.    
\end{equation}

Consider the worst-case allocation of the nonzero singular values that maximizes the cumulative ratio
$\sum_{i=1}^k \sigma_i / \sum_{i=1}^d \sigma_i$ given a fixed condition number bound $\rho_\Phi$:
the top $k$ singular values all equal $\sigma_{\max}$ and the remaining $d-k$ equal $\sigma_{\min}^+$.
Then
\begin{equation}\label{eq:upperbound_of_srank_ratio}
\frac{\sum_{i=1}^k \sigma_i(\Phi)}{\sum_{i=1}^d \sigma_i(\Phi)}
\;\le\;
\frac{k\,\sigma_{\max}}{k\,\sigma_{\max} + (d-k)\,\sigma_{\min}^+}
\;=\;
\frac{k\,\rho_\Phi}{k\,\rho_\Phi + (d-k)}.
\end{equation}

To achieve a coverage level of \(1-\delta\) with \(k\) singular values, it is necessary that
\[
\frac{k\,\rho_\Phi}{k\,\rho_\Phi + (d-k)} \;\ge\; 1-\delta
\quad\Longrightarrow\quad
k \;\ge\; \frac{(1-\delta)\,d}{\delta\,\rho_\Phi + (1-\delta)}.
\]
Taking the ceiling and substituting the bound on \(\rho_\Phi\) from~(\ref{eq:th1_condition_number_bound}), establishes the left inequality in~(\ref{eq:th1_srank_bounds})

When $\Sigma_Z=I$, we have $M=Q^\top I Q = I$, so $\eta_1=\cdots=\eta_b=1$ and $d=b$.
Then, 
\[
\sqrt{n}\sqrt{1-\varepsilon} \le \sigma_i(\Phi) \le \sqrt{n}\sqrt{1+\varepsilon}\qquad(\forall i),
\]
Which leads to 
\[
\rho_\Phi \le \sqrt{\tfrac{1+\varepsilon}{1-\varepsilon}}
\]
Substituting $\rho_\Phi$ in first inequality of (\ref{eq:th1_srank_bounds}) leads to the right inequality (\ref{eq:th1_srank_bounds}). Without whitening, the achievable flattening is limited by the compressed input spectrum $M=Q^\top \Sigma_Z Q$.

\end{proof} 

Next, we present the proof for Theorem~\ref{thm:dfi-dormancy}.
\paragraph{Proof of Theorem \ref{thm:dfi-dormancy}}

\begin{proof}
By isotropy and positive homogeneity, there exists a constant $c_\sigma>0$ such that
$\mathbb{E}_z[|\sigma(\langle z, w_j \rangle)|] = c_\sigma \|w_j\|$ for all $j$.
Hence $s_j = \|w_j\| / \big(\frac{1}{b}\sum_k \|w_k\|\big)$.
Let $u_j=\|w_j\|^2 = [W^\top W]_{jj}$.
Since
\[
\mathrm{DfI}(W)=\|W^\top W - I\|_F^2
= \sum_{j=1}^b (u_j - 1)^2 + 2\sum_{i<j} \langle w_i, w_j \rangle^2
\;\ge\; \sum_{j=1}^b (u_j - 1)^2,
\]
we obtain $|u_j-1|\le \sqrt{\sum_k (u_k-1)^2} \le \epsilon$, i.e., $1-\epsilon \le \|w_j\|^2 \le 1+\epsilon$ for all $j$.
The same bounds imply $\sqrt{1-\epsilon} \le \bar r := \frac{1}{b}\sum_k \|w_k\| \le \sqrt{1+\epsilon}$.
Therefore
\[
\frac{\sqrt{1-\epsilon}}{\sqrt{1+\epsilon}} \le s_j = \frac{\|w_j\|}{\bar r} \le \frac{\sqrt{1+\epsilon}}{\sqrt{1-\epsilon}}.
\]
\end{proof}

\begin{corollary}[Absence of $\tau$-dormant neurons]
Fix $\tau \in (0,1)$. If $\mathrm{DfI}(W) \le \big(\tfrac{1-\tau^2}{1+\tau^2}\big)^2$, then $s_j \ge \tau$ for all $j$.
\end{corollary}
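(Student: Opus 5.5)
This corollary follows directly from Theorem~\ref{thm:dfi-dormancy}. That theorem gives the lower bound $s_j \ge \sqrt{(1-\varepsilon)/(1+\varepsilon)}$ for all $j$ whenever $\varepsilon = \sqrt{\mathrm{DfI}(W)} < 1$. It therefore suffices to show that the hypothesis $\mathrm{DfI}(W) \le \bigl(\tfrac{1-\tau^2}{1+\tau^2}\bigr)^2$ forces $\varepsilon < 1$ and $\sqrt{(1-\varepsilon)/(1+\varepsilon)} \ge \tau$.

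The first step is to take square roots in the hypothesis. Since $\tau\in(0,1)$, the quantity $\tfrac{1-\tau^2}{1+\tau^2}$ lies in $(0,1)$, so
\[
\varepsilon \le \frac{1-\tau^2}{1+\tau^2} < 1 .
\]
This verifies the standing assumption $\varepsilon<1$ of Theorem~\ref{thm:dfi-dormancy}.

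The second step is to invert the map $g(\varepsilon) = \tfrac{1-\varepsilon}{1+\varepsilon}$. On $[0,1)$ this map is strictly decreasing, and it is an involution: solving $\tfrac{1-\varepsilon}{1+\varepsilon} = \tau^2$ gives $\varepsilon = \tfrac{1-\tau^2}{1+\tau^2}$. Hence $\varepsilon \le \tfrac{1-\tau^2}{1+\tau^2}$ implies $g(\varepsilon) \ge \tau^2$. Taking square roots, which is monotone on nonnegative reals, gives $\sqrt{(1-\varepsilon)/(1+\varepsilon)} \ge \tau$. Combining this with the theorem yields $s_j \ge \tau$ for every $j$, so no neuron is $\tau$-dormant in the sense of \cite{sokar2023dormant}.

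There is no real obstacle here. The only point to check is the direction of the monotonicity of $g$ and that $\tfrac{1-\tau^2}{1+\tau^2}\in(0,1)$, so that the hypothesis of Theorem~\ref{thm:dfi-dormancy} is met. The assumptions on $\sigma$ (positive homogeneity) and on the input ($z\sim\mathcal N(0,I_a)$) are simply inherited from the theorem.
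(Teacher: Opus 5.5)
Your proposal is correct and follows the paper's route. The paper gives no separate proof and treats the corollary as an immediate inversion of the lower bound $s_j \ge \sqrt{(1-\varepsilon)/(1+\varepsilon)}$ from Theorem~\ref{thm:dfi-dormancy}. Your check that $\varepsilon \le \tfrac{1-\tau^2}{1+\tau^2} < 1$, together with the fact that $\varepsilon \mapsto \tfrac{1-\varepsilon}{1+\varepsilon}$ is decreasing and an involution, supplies the details the paper leaves implicit.
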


Note that \cite{sokar2023dormant} measured neurons with a dormancy score of 0.025 or lower as dormant. In this threshold, based on our theoretical analysis, $\mathrm{DfI}(W) < 0.9975$ can eliminate dormant neurons from the network.


\textbf{Derivation of Equation~\ref{eq:orthogonalization_solution}}

Here we provide derivation of Equation~\ref{eq:orthogonalization_solution}.

First expand the norm:
\[
\|W - \widetilde{W}\|_F^2
= \|W\|_F^2 + \|\widetilde{W}\|_F^2 - 2\operatorname{tr}(\widetilde{W}^\top W).
\]
From the constraint \(\widetilde{W}^\top\widetilde{W}=I\), we have
\(\|\widetilde{W}\|_F^2 = \operatorname{tr}(I) = n\), so
\[
\min_{\widetilde{W}^\top\widetilde{W}=I} \|W - \widetilde{W}\|_F^2
\quad\Longleftrightarrow\quad
\max_{\widetilde{W}^\top\widetilde{W}=I} \operatorname{tr}(\widetilde{W}^\top W).
\]

Let $S := W^\top W \succ 0$, and define $Q := W S^{-1/2}$. Then
\[
Q^\top Q = S^{-1/2} W^\top W S^{-1/2}
= S^{-1/2} S S^{-1/2} = I,
\]
so \(Q\) is feasible, and
\[
W = Q S^{1/2}
\]
is the (column) polar decomposition of \(W\).

Now take any feasible \(\widetilde{W}\) and set
\[
Z := \widetilde{W}^\top Q.
\]
Then
\[
\operatorname{tr}(\widetilde{W}^\top W)
= \operatorname{tr}(\widetilde{W}^\top Q S^{1/2})
= \operatorname{tr}(Z S^{1/2}).
\]

Because \(\widetilde{W}\) and \(Q\) have orthonormal columns, one can show
\(Z^\top Z \le I\), so all singular values \(\sigma_i(Z)\) satisfy
\(0 \le \sigma_i(Z) \le 1\).  
By von Neumann’s trace inequality,
\[
\operatorname{tr}(Z S^{1/2})
\le \sum_{i=1}^n \sigma_i(Z)\,\sigma_i(S^{1/2})
\le \sum_{i=1}^n \sigma_i(S^{1/2})
= \operatorname{tr}(Q^\top W),
\]
with equality when \(Z = I\), i.e., when \(\widetilde{W} = Q\).

Therefore the solution is:
\[
\widetilde{W}^\star = Q = W (W^\top W)^{-\tfrac{1}{2}}.
\]

\newpage
\section{Additional results}

\subsection{Computational efficiency}

To prove computational efficiency of FIRE, we provide wall-clock time and GPU memory usage in Table~\ref{tab:computation}. 

\begin{table}[H]
    \centering
    \caption{Wall-Clock Time and GPU memory footprint of FIRE and baseline methods}
    \begin{tabular}{l l l l}
        \toprule
        \textbf{Method} & \textbf{Wall-Clock Time} & \textbf{GPU Memory}\\
        \midrule
        Shrink Perturb& 0.002 sec & 27 MB \\
        \midrule
        FIRE          & 0.06 sec  & 55 MB \\
        \midrule
        DASH          & 69 sec    & 2834 MB \\
        \bottomrule
    \end{tabular}
    \label{tab:computation}
\end{table}

As shown in the table, FIRE introduces negligible computational cost and memory usage similar to Shrink Perturb, while significantly efficient compard to DASH. 

The result is averaged across 10 trials, on VGG16 architecture with TinyImageNet dataset. We used a machine consists of TITAN RTX 24GB GPU and AMD Ryzen 7 5800X 8-Core Processor, with 64GB RAM.

\subsection{Number of iterations for Newton-Schulz iteration}

In this section, we provide a more detailed analysis which illustrates how SFE and DfI evolve across FIRE iterations.

\begin{figure}[h]
    \centering
    \includegraphics[width=\linewidth]{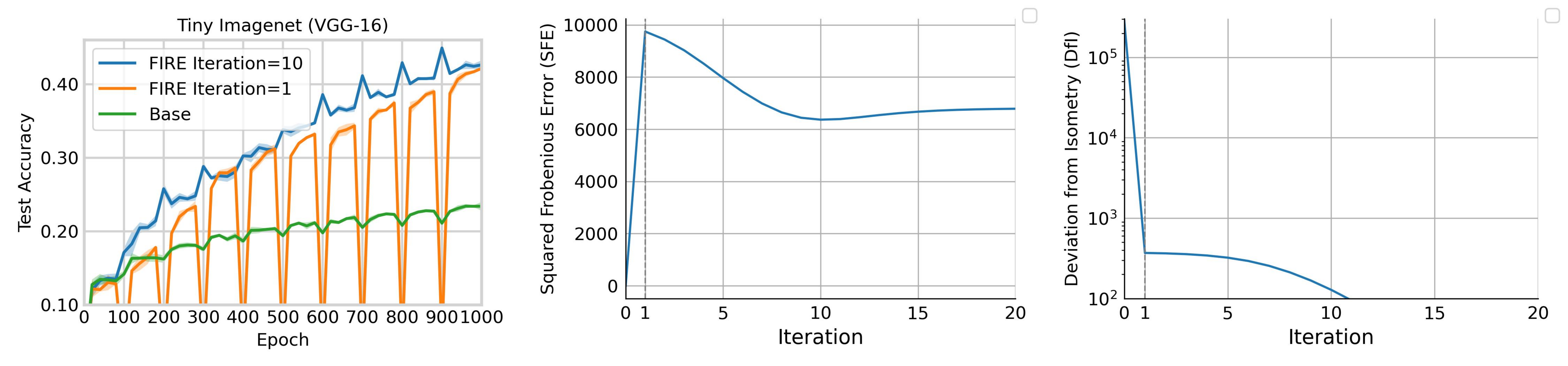}
    \caption{\textbf{Effect of number of FIRE iterations.} Test accuracy of FIRE with single iteration and 10 iterations (left). Change of SFE during FIRE iterations (middle). Change of DfI during FIRE iterations (right).}
    \label{fig:appendix_anal}
\end{figure}

As shown in Figure~\ref{fig:appendix_anal} (right), DfI decreases substantially after only a single iteration. This suggests that using a small number of iterations ($<5$) is sufficient to bring performance benefits. However, as shown in Figure~\ref{fig:appendix_anal} (middle), SFE reaches its peak at the first iteration and then decreases as the number of iterations increases, indicating that using only a few iterations ($<5$) can introduce instability and ultimately lead to performance degradation.

We validate this result in continual visual learning (VGG-16 with Tiny-ImageNet). Figure~\ref{fig:appendix_anal} (left) shows comparison between FIRE with 10 iterations and with single iteration. The result shows that even with single iteration still can achieve comparable performance with 10 iterations, but show significant drop after reinitalization, which supports aforementioned findings. 

\subsection{Coefficients for Newton-Schulz iteration}

Our orthogonalization strategy builds on the Newton--Schulz (NS) iteration, 
which has also been adopted in recent works such as Muon. 
However, the exact recurrence used in Muon differs from ours. 
Muon employs a tuned quintic polynomial of the form
\[
\varphi(x) = a x + b x^3 + c x^5,
\]
with optimized coefficients such as $(a,b,c) = (3.4445, -4.7750, 2.0315)$,
chosen to accelerate convergence so that only a few iterations are needed in practice. However, this sacrifices accuracy for speed, since the singular values do not converge to 1, but oscillate near it. Since our interest is accuracy rather than speed, we adopt the standard coefficients $(a,b,c) = (2, -1.5, 0.5)$,
which correspond to a well-known rectangular variant of NS:
\[
X_{k+1} \;=\; 2X_k \;-\; 1.5X_k(X_k^\top X_k) \;+\; 0.5X_k(X_k^\top X_k)^2.
\]
Although this more slowly increases small singular values than Muon’s tuned version, it accurately converges to orthogonal matrix. 

Empirically, we did not observed significant difference in performance when we tested both coefficients in the warm-start setting (results are shown in Figure~\ref{fig:appendix_anal2}). 

\begin{figure}[h]
    \centering
    \includegraphics[width=\linewidth]{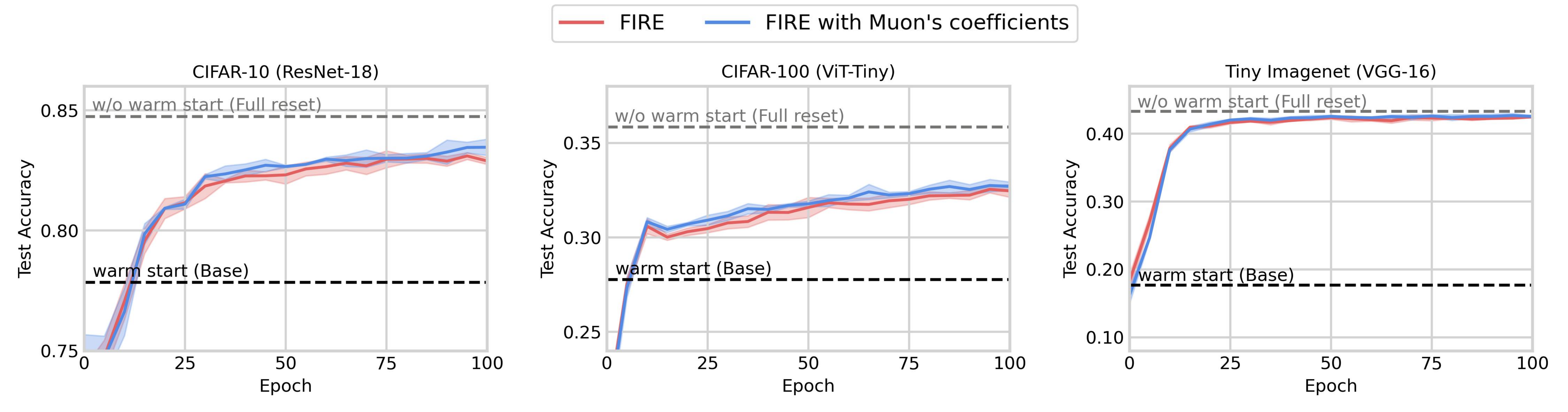}
    \caption{\textbf{Effect of Newton Schulz iteration coefficients on FIRE.} FIRE and FIRE with Muon's coefficients are evaluated on warm-start setting under CIFAR-10 with ResNet-18 (left), CIFAR-100 with ViT-Tiny (middle), and Tiny ImageNet with VGG-16 (right).}
    \label{fig:appendix_anal2}
\end{figure}

\subsection{Train accuracy}

\begin{figure}[h]
    \centering
    \includegraphics[width=0.95\linewidth]{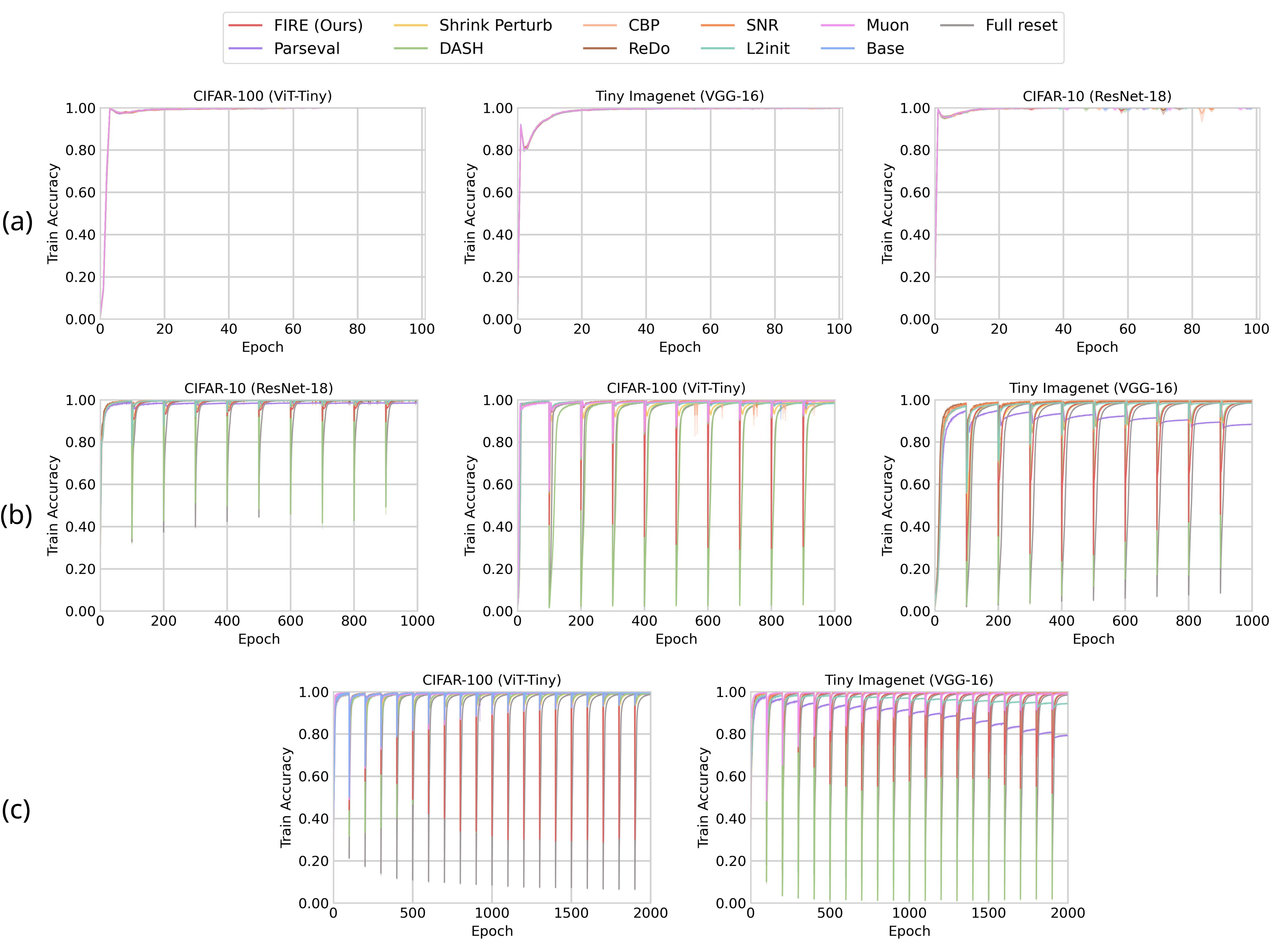}
    \caption{\textbf{Train accuracy of continual visual learning experiment.} Warm-start setting (a), Continual setting (b), and Class-incremental setting (c).}
    \label{fig:appendix_anal3}
\end{figure}

\newpage
\section{Implementation Details of FIRE}

Here we describe how FIRE is applied in practice to different modules of the network.

\paragraph{Linear layers.}
For fully-connected weights $W \in \mathbb{R}^{d_{\text{out}}\times d_{\text{in}}}$,
we first normalize and then apply NS iteration to approximate an orthogonal matrix. 
Since orthogonalization alone changes the scale of the outputs, 
we multiply the result by
\[
\text{scale} = \sqrt{\tfrac{d_{\text{out}}}{d_{\text{in}}}}.
\]
This factor is motivated by the Modular Duality framework \citep{bernstein2025modular}, 
which shows that taking the ratio of output to input dimension is sufficient 
to preserve stable signal variance. 
In short, the orthogonalization ensures the weights are well-conditioned, 
and the scaling factor restores the right magnitude.

\paragraph{Convolutional layers.}
For convolutional filters 
$W \in \mathbb{R}^{C_{\text{out}} \times C_{\text{in}} \times k_h \times k_w}$,
we apply the same procedure slice by slice over the spatial indices. 
Here the scaling factor additionally accounts for the size of the kernel:
\[
\text{scale} = \frac{\sqrt{C_{\text{out}}/C_{\text{in}}}}{k_h k_w}.
\]
Intuitively, the larger the kernel, the more input values contribute to each output, 
so we divide by the kernel area to prevent the output variance from exploding.

Note that for each spatial location $(i,j)$, the slice $W[:,:,i,j] \in \mathbb{R}^{C_{\mathrm{out}} \times C_{\mathrm{in}}}$ is orthogonalized independently by applying the Newton--Schulz iteration.

\paragraph{Attention modules.}
In Vision Transformers (ViTs), 
we restrict orthogonalization to the query ($Q$) and key ($K$) projections. 
Empirically, applying it to the feedforward MLP layers or the output projections 
does not provide clear benefits and may even reduce performance. 
Because the dot-product $QK^\top$ is the part most sensitive to poor conditioning, 
orthogonalizing $Q$ and $K$ helps improve the stability of similarity scores 
while leaving the value ($V$), output ($O$), and MLP weights unchanged.

\newpage
\section{Baseline Methods}

\textbf{Shrink \& Perturb.} Shrink \& Perturb (S\&P) is a Reset-based method that shrinks weight parameters and injects noise \citep{ash2020warm}. This method has proven particularly beneficial for warm-start training. Following the setup in prior work \cite{lee2024slow}, we control both the noise level and the shrinkage strength using a single hyperparameter. Formally, letting $\theta$ denote the learnable parameters, $\theta_0$ the initial parameters, and $\lambda$ the S\&P coefficient, the update rule is:  \(\theta \leftarrow (1-\lambda)\theta + \lambda\theta_0\).

\textbf{DASH.} Direction-Aware SHrinking (DASH) \citep{shin2024dash} is a Reinitialization-based method that selectively shrinks network weights according to their directional alignment with the loss gradient, measured by cosine similarity. This method suppresses parameters that contribute to noise memorization while preserving weights that encode task-relevant features. This method enhances training efficiency and preserves model plasticity, leading to improved generalization under stationary data distributions. In our experiments, we applied DASH when new data was added.

\textbf{Parseval Regularization.} Parseval Reg. introduces a regularization term that enforces the weight matrices of neural network layers to remain approximately orthogonal  \citep{chung2024parseval}. Formally, letting $W$ denote a weight matrix, $I$ identity matrix, and $\|\cdot\|_F$ the Frobenius norm. The loss term is $\lambda\|WW^\top - sI\|_F^2$, where $s > 0$ is a scaling factor and $\lambda$ is the regularization strength. It penalizes the deviation of \( WW^\top \) from a scaled identity matrix, encouraging the rows of each weight matrix \( W \) to be orthogonal and have controlled norms. This constraint keeps the singular values of \( W \) close to a constant, preventing gradient explosion or vanishing and leading to more stable and efficient optimization. We used $s=1$ in all experiments and only swept $\lambda$.

\textbf{Continual Backpropagation.} Continual Backpropagation (CBP) selectively reinitializes low-utility hidden units using a contribution-utility measure \citep{dohare2024loss}. Contribution-utility scores are computed as an exponential moving average of the unit’s activation magnitude multiplied by the summed magnitude of its outgoing weights. Units with persistently low contribution utility are considered uninformative and are periodically reset. CBP is controlled by two hyperparameters: the maturity threshold $m$, which protects units from reinitialization for at least $m$ update steps to allow stable utility estimation, and the replacement rate $\rho$, which determines the expected fraction of units to reset at each update step via fractional accumulation.

\textbf{Recycling Dormant neurons.} Recycling Dormant neurons (ReDo) \citep{sokar2023dormant} is another unit-reinitialization method that assigns a neuron score to each hidden unit in every layer, and resets units whose scores fall below the hyperparameter $\tau$. The neuron score $s$ is computed as the ratio between a unit’s average activation magnitude and the average activation magnitude of all units in the same layer, formally defined as $s_i^\ell = \frac{\mathbb E_{x\in D}{|h_i^\ell(x)|}}{\frac{1}{H^\ell} \sum_{k\in h} \mathbb E_{x\in D} |h_k^\ell(x)|}$, where $h_i^\ell(x)$ denotes the activation of neuron $i$ in layer $\ell$ for input $x \in D$, and $H^\ell$ is the number of neurons in layer $\ell$. 

\textbf{L2 Init.} L2 Init \citep{kumar2025maintaining}, as known as Regen (Regenerative regularization), is a weight regularization method designed to mitigate plasticity loss by leveraging the property that the initial network exhibits the highest plasticity. L2 Init regularizes the weights to stay close to the initial weights by adding a term $\lambda \|W-W_0\|_F^2$ to the loss function, where $\lambda$ is the regularization strength, $W$ is the current weight matrix and $W_0$ is initial weight matrix.

\textbf{Self-Normalized Resets.} Self-Normalized Resets (SNR) \citep{farias2024self} is a reset-based method that detects inactive neurons by monitoring their firing statistics and statistically testing whether a neuron’s activity is consistent with its past behavior. For each neuron, SNR maintains an empirical distribution of inter-firing times (the number of consecutive updates with zero activation). If the computed probability falls below a threshold $1-\tau$, the neuron is classified as inactive and reset. This procedure adaptively replaces neurons whose activity has effectively vanished, mitigating plasticity loss without relying on a fixed, hand-tuned inactivity window.

\textbf{Muon.} Muon \citep{jordan6muon} is an optimizer that augments SGD with momentum by orthogonalizing its update matrices. Concretely, Muon first forms the usual SGD-momentum update $G$ for each weight matrix and then applies a small fixed number of Newton–Schulz iterations to approximate the closest semi-orthogonal matrix $Ortho(G)$, effectively replacing $G$ by a matrix with singular values near one while staying close in Frobenius norm. Following the reference implementation, in our experiments we apply Muon only to the middle weight matrices of hidden layers, while scalar and vector parameters, as well as input and output layers, are optimized with AdamW. We set the momentum to 0.95 as recommended by the authors.

\textbf{Plasticity Injection.} Plasticity injection \citep{nikishin2023deep} restores neural network’s plasticity by adding a fresh, randomly initialized copy of the prediction head while leaving current predictions unchanged at the moment of the change. The original prediction head is frozen, and two identical new heads are created, one that is allowed to learn and one that always stays fixed. At the start, the learned and fixed new heads cancel each other out, so the overall output of the agent stays exactly the same. As training continues, the learnable new head adapts to new data, giving the agent renewed flexibility, while the original and the fixed new head act as a stable reference. For DQN, we applied plasticity injection to MLP layers, and for SAC, we applied it to whole critic network which is known to suffer from severe plasticity loss~\citep{ma2023revisiting}.
\newpage
\section{Detailed experiment settings}\label{app:detailed_settings}

\subsection{Continual Visual Learning}
\label{app:detailed_settings_cvl}
For continual visual learning, we report the results with 3 seeds.

\begin{table}[h]
    \centering
    \caption{Detailed settings in continual visual learning.}
    \begin{tabular}{l r}
        \toprule
        \textbf{Parameter} & \textbf{Value} \\
        \midrule
        Optimizer & Adam \citep{kingma2014adam} \\
        Learning Rate & $1\mathrm{e}{-3}$ \\
        Learning Rate Scheduler & Warmup \\
        Gradient norm clipping & 0.5 \\
        Batch Size & 256 \\
        Epochs per Chunk & 100 \\
        Data Augmentation & False \\
        \bottomrule
    \end{tabular}
    \label{tab:detailed_settings_cvl}
\end{table}

In this section, we describe the detailed settings for conducting continual visual learning. We note that most of the hyperparameters we used were adopted from \cite{lee2024slow}.

For the Warmup scheduler, the learning rate is increased linearly from 0 to the target learning rate during the first 10\% of training on each dataset. In other words, in the case of Table~\ref{tab:detailed_settings_cvl}, the learning rate is gradually raised from 0 to $1\mathrm{e}{-3}$ over the first 10 epochs of each data chunk.

In the warm-start scenario described in Section~\ref{section:continual_visual}, we trained for 1000 epochs before new data arrived and for 100 epochs after its arrival, in order to balance the total number of gradient updates before and after the introduction of new data.

\subsection{Continual Pretraining of LLMs}
\label{app:detailed_settings_cpl}

For continual pretraining of LLMs, we report the results with 3 seeds.

\begin{table}[h]
    \centering
    \caption{Detailed settings in continual pretraining of LLMs.}
    \begin{tabular}{l r}
        \toprule
        \textbf{Parameter} & \textbf{Value} \\
        \midrule
        Optimizer & AdamW \citep{loshchilov2017decoupled} \\
        Weight Decay & $1\mathrm{e}{-1}$\\
        Learning Rate & $6\mathrm{e}{-4}$ \\
        Minimum Learning Rate & $6\mathrm{e}{-5}$ \\
        Learning Rate Scheduler & Warmup + Linearly Decaying \\
        Gradient norm clipping & 1.0 \\
        Batch Size & 480 \\
        \bottomrule
    \end{tabular}
    \label{tab:detailed_settings_cpl}
\end{table}


We used implementation and hyperparameters of nanoGPT from \cite{karpathy2023nanogpt}. 

During first phase, the learning rate linearly increases from 0 to the target learning rate ($6\mathrm{e}{-4}$) during 2,000 steps, then annealed to minimum learning rate ($6\mathrm{e}{-5}$) until the end of the phase. In the second phase, the learning rate linearly increases from 0 to the target learning rate ($6\mathrm{e}{-4}$) during 10\% of training iterations of second phase. Then, it decreases linearly to minimum learning rate ($6\mathrm{e}{-5}$) until the end of the phase. 

\subsection{Reinforcement Learning}
\label{app:detailed_settings_rl}

For reinforcement learning, we report the results with 5 seeds.

For S\&P method, we apply S\&P to the encoder and Reset to the fully connected layers \citep{d2022sample} for discrete control, and S\&P with $\lambda=0.8$ to whole parameters for continuous control tasks. We perform a single intervention (Full Reset, S\&P, FIRE) at the midpoint of learning. We followed the hyperparameter configurations used in prior work \citep{sokar2023dormant}. 

\begin{table}[H]
    \centering
    \caption{Hyperparameters used in the ALE environment with DQN algorithm.}
    \begin{tabular}{l r}
        \toprule
        \textbf{Parameter} & \textbf{Value} \\
        \midrule
        Optimizer & Adam \citep{kingma2014adam} \\
        Optimizer: $\epsilon$ & $1.5\mathrm{e}-4$ \\
        Optimizer: Learning rate & $6.25\mathrm{e}-5$ \\
        \midrule
        Minimum $\epsilon$ for training & $0.01$ \\
        Evaluation $\epsilon$ & $0.001$\\
        Discount factor $\gamma$ & $0.99$ \\
        Replay buffer size & $10^6$ \\
        Minibatch size & $32$ \\
        Initial collect steps & $20000$ \\
        Training iterations & $10$ \\
        Training environment steps per iteration & $250K$ \\
        Updates per environment step (Replay Ratio) & $1$ \\
        Target network update period & $2000$\\
        Loss function & Huber Loss \citep{huber1992robust} \\
        
        \bottomrule
    \end{tabular}
    \label{tab:hyperparameter_atari_env}
\end{table}

For continuous tasks, the hyperparameter setting is followed by \cite{lee2024simba}.

\begin{table}[H]
    \centering
    \caption{Hyperparameters used in HumanoidBench environments with SimBa.}
    \begin{tabular}{l r}
        \toprule
        \textbf{Parameter} & \textbf{Value} \\
        \midrule
        Optimizer & AdamW \citep{loshchilov2017decoupled} \\
        Optimizer: Learning rate & $1\mathrm{e}-4$ \\
        Optimizer: Weight decay & $0.01$ \\
        \midrule
        Actor hidden dim & $128$ \\
        Actor num blocks & $1$ \\
        Critic hidden dim & $512$ \\
        Critic num blocks & $2$ \\
        Discount factor $\gamma$ & $0.99$ \\
        Clipped Double-Q \citep{fujimoto2018td3} & True \\
        Replay buffer size & $10^6$ \\
        Minibatch size & $256$ \\
        Initial collect steps & $5000$ \\
        Updates per environment step (Replay Ratio) & $2$ \\
        Soft target update factor $\tau $ & $0.005$\\
        \bottomrule
    \end{tabular}
    \label{tab:hyperparameter_hbench_env}
\end{table}

\subsection{Hyperparameter Search Space}
Table \ref{tab:hyperparameter_search_space} presents the hyperparameter search space, and Tables \ref{tab:hyperparameter_ws}-\ref{tab:hyperparameter_llm} present their optimal values.

\begin{table}[H]
    \centering
    \caption{Hyperparameter search space for all experiments.}
    \begin{tabular}{l l l l}
        \toprule
        \textbf{Experiment} & \textbf{Method} & \textbf{Hyperparameters} & \textbf{Search Space}\\
        \midrule
        Warm-Starting    & S\&P & $\lambda$ & $0.2, 0.4, 0.6, 0.8$ \\
                         & DASH & $\alpha$ & $0.1, 0.3$\\
                         &      & $\lambda$ & $0.05, 0.1, 0.3$\\
                         & Parseval Reg. & $\lambda$ & $1\mathrm{e}{-3}, 1\mathrm{e}{-4}, 1\mathrm{e}{-5}$ \\
                         & CBP & $\rho$ & $1\mathrm{e}{-4}, 1\mathrm{e}{-5}$\\
                         &     & $m$ & $100, 1000$ \\
                         & ReDo & $\tau$ & $0.01, 0.05, 0.1, 0.5$ \\
                         & L2 Init & $\lambda$ & $1\mathrm{e}{-3},1\mathrm{e}{-4}, 1\mathrm{e}{-5}$ \\
                         & SNR & $\tau$ & $0.01, 0.02, 0.04, 0.08$ \\ 
        \midrule
        Continual Learning & S\&P & $\lambda$ & $0.2, 0.4, 0.6, 0.8$ \\
                         & DASH & $\alpha$ & $0.1, 0.3$\\
                         &      & $\lambda$ & $0.05, 0.1, 0.3$\\
                         & Parseval Reg. & $\lambda$ & $1\mathrm{e}{-3}, 1\mathrm{e}{-4}, 1\mathrm{e}{-5}$ \\
                         & CBP & $\rho$ & $1\mathrm{e}{-4}, 1\mathrm{e}{-5}$\\
                         &     & $m$ & $100, 1000$ \\
                         & ReDo & $\tau$ & $0.01, 0.05, 0.1, 0.5$ \\
                         & L2 Init & $\lambda$ & $1\mathrm{e}{-3},1\mathrm{e}{-4}, 1\mathrm{e}{-5}$ \\
                         & SNR & $\tau$ & $0.01, 0.02, 0.04, 0.08$ \\ 
        \midrule
        Class-Incremental Learning & S\&P & $\lambda$ & $0.2, 0.4, 0.6, 0.8$ \\
                          & DASH & $\alpha$ & $0.1, 0.3$\\
                          &      & $\lambda$ & $0.05, 0.1, 0.3$\\
                          & Parseval Reg. & $\lambda$ & $1\mathrm{e}{-3}, 1\mathrm{e}{-4}, 1\mathrm{e}{-5}$ \\
                         & CBP & $\rho$ & $1\mathrm{e}{-4}, 1\mathrm{e}{-5}$\\
                         &     & $m$ & $100, 1000$ \\
                         & ReDo & $\tau$ & $0.01, 0.05, 0.1, 0.5$ \\
                         & L2 Init & $\lambda$ & $1\mathrm{e}{-3},1\mathrm{e}{-4}, 1\mathrm{e}{-5}$ \\
                         & SNR & $\tau$ & $0.01, 0.02, 0.04, 0.08$ \\ 
        \midrule
        Continual pretraining of GPT-0.1B & S\&P & $\lambda$ & $0.2, 0.5, 0.8$ \\
        \bottomrule
    \end{tabular}
    \label{tab:hyperparameter_search_space}
\end{table}

\begin{table}[H]
    \centering
    \caption{Hyperparameters for Warm-Start setting.}
    \begin{tabular}{l l l l}
        \toprule
        \textbf{Dataset} & \textbf{Method} & \textbf{Value}\\
        \midrule
        CIFAR-10      & S\&P & $\lambda$ = $0.8$ \\
        (ResNet-18)   & DASH & $\alpha$ = $0.3$, $\lambda$ = $0.05$ \\
                      & Parseval Reg. & $\lambda$ = $1\mathrm{e}{-3}$ \\
                      & CBP & $\tau=1\mathrm{e}{-4}, m=1000$\\
                      & ReDo & $\tau=0.5$\\
                      & L2 Init & $\lambda=1\mathrm{e}{-3}$\\
                      & SNR & $\tau=0.01$\\
                      & FIRE & $\text{iter} = 10$& \\
        \midrule
        CIFAR-100     & S\&P & $\lambda$ = $0.8$ \\
        (ViT-Tiny)    & DASH & $\alpha$ = $0.1$, $\lambda$ = $0.05$\\
                      & Parseval Reg. & $\lambda$ = $1\mathrm{e}{-5}$ \\
                      & CBP & $\tau=1\mathrm{e}{-4}, m=100$\\
                      & ReDo & $\tau=0.5$\\
                      & L2 Init & $\lambda=1\mathrm{e}{-3}$\\
                      & SNR & $\tau=0.01$\\
                      & FIRE & $\text{iter} = 10$& \\
        \midrule
        Tiny ImageNet & S\&P & $\lambda$ = $0.8$ \\
        (VGG-16)      & DASH & $\alpha$ = $0.1$, $\lambda$ = $0.05$\\
                      & Parseval Reg. & $\lambda$ = $1\mathrm{e}{-3}$ \\
                      & CBP & $\tau=1\mathrm{e}{-4}, m=1000$\\
                      & ReDo & $\tau=0.5$\\
                      & L2 Init & $\lambda=1\mathrm{e}{-3}$\\
                      & SNR & $\tau=0.08$\\
                      & FIRE & $\text{iter} = 10$& \\
        \bottomrule
    \end{tabular}
    \label{tab:hyperparameter_ws}
\end{table}

\begin{table}[H]
    \centering
    \caption{Hyperparameters for Continual Setting.}
    \begin{tabular}{l l l l}
        \toprule
        \textbf{Dataset} & \textbf{Method} & \textbf{Value}\\
        \midrule
        CIFAR-10      & S\&P & $\lambda$ = $0.8$ \\
        (ResNet-18)   & DASH & $\alpha$ = $0.3$, $\lambda$ = $0.05$ \\
                      & Parseval Reg. & $\lambda$ = $1\mathrm{e}{-3}$ \\
                      & CBP & $\tau=1\mathrm{e}{-5}, m=1000$\\
                      & ReDo & $\tau=0.05$\\
                      & L2 Init & $\lambda=1\mathrm{e}{-5}$\\
                      & SNR & $\tau=0.08$\\
                      & FIRE & $\text{iter} = 10$& \\
        \midrule
        CIFAR-100     & S\&P & $\lambda$ = $0.8$ \\
        (ViT-Tiny)    & DASH & $\alpha$ = $0.1$, $\lambda$ = $0.05$\\
                      & Parseval Reg. & $\lambda$ = $1\mathrm{e}{-5}$ \\
                      & CBP & $\tau=1\mathrm{e}{-5}, m=1000$\\
                      & ReDo & $\tau=0.01$\\
                      & L2 Init & $\lambda=1\mathrm{e}{-4}$\\
                      & SNR & $\tau=0.04$\\
                      & FIRE & $\text{iter} = 10$& \\
        \midrule
        Tiny ImageNet & S\&P & $\lambda$ = $0.8$ \\
        (VGG-16)      & DASH & $\alpha$ = $0.1$, $\lambda$ = $0.1$\\
                      & Parseval Reg. & $\lambda$ = $1\mathrm{e}{-4}$ \\
                      & CBP & $\tau=1\mathrm{e}{-5}, m=1000$\\
                      & ReDo & $\tau=0.01$\\
                      & L2 Init & $\lambda=1\mathrm{e}{-5}$\\
                      & SNR & $\tau=0.01$\\
                      & FIRE & $\text{iter} = 10$& \\
        
        \bottomrule
    \end{tabular}
    \label{tab:hyperparameter_cl}
\end{table}

\begin{table}[H]
    \centering
    \caption{Hyperparameters for Class-Incremental Setting.}
    \begin{tabular}{l l l l}
        \toprule
        \textbf{Dataset} & \textbf{Method} & \textbf{Value}\\
        \midrule
        CIFAR-100     & S\&P & $\lambda$ = $0.8$ \\
        (ViT-Tiny)    & DASH & $\alpha$ = $0.3$, $\lambda$ = $0.3$\\
                      & Parseval Reg. & $\lambda$ = $1\mathrm{e}{-5}$ \\
                      & CBP & $\tau=1\mathrm{e}{-5}, m=1000$\\
                      & ReDo & $\tau=0.01$\\
                      & L2 Init & $\lambda=1\mathrm{e}{-5}$\\
                      & SNR & $\tau=0.08$\\
                      & FIRE & $\text{iter} = 10$& \\
        \midrule
        Tiny ImageNet & S\&P & $\lambda$ = $0.8$ \\
        (VGG-16)      & DASH & $\alpha$ = $0.3$, $\lambda$ = $0.1$\\
                      & Parseval Reg. & $\lambda$ = $1\mathrm{e}{-3}$ \\
                      & CBP & $\tau=1\mathrm{e}{-4}, m=1000$\\
                      & ReDo & $\tau=0.05$\\
                      & L2 Init & $\lambda=1\mathrm{e}{-4}$\\
                      & SNR & $\tau=0.02$\\
                      & FIRE & $\text{iter} = 10$& \\
        \bottomrule
    \end{tabular}
    \label{tab:hyperparameter_ci}
\end{table}

\begin{table}[H]
    \centering
    \caption{Hyperparameters for Continual pretraining of LLMs.}
    \begin{tabular}{l l l l}
        \toprule
        \textbf{Method} & \textbf{CKPT} & \textbf{Value}\\
        \midrule
        S\&P     & Best & $\lambda$ = $0.5$ \\
                 & 30k & $\lambda$ = $0.8$ \\
                 & 60k & $\lambda$ = $0.5$ \\
        FIRE     & Best & $\text{iter} = 5$& \\
                 & 30k & $\text{iter} = 5$& \\
                 & 60k & $\text{iter} = 5$& \\
        \bottomrule
    \end{tabular}
    \label{tab:hyperparameter_llm}
\end{table}

\end{document}